\pgfplotsset{compat=1.14} 
\newcommand{\quotes}[1]{``#1''}
\newtheorem{lemma}{Lemma}
\def\va{{\bm{a}}}
\def\vb{{\bm{b}}}
\def\ve{{\bm{e}}}
\def\vk{{\bm{k}}}
\def\vl{{\bm{l}}}
\def\vo{{\bm{o}}}
\def\vq{{\bm{q}}}
\def\vv{{\bm{v}}}
\def\vx{{\bm{x}}}
\def\mI{{\bm{I}}}
\def\mW{{\bm{W}}}
\newcommand{\softmax}{\mathrm{softmax}}
\newcommand{\normalize}{\mathrm{normalize}}
\def\1{\bm{1}}
\newcommand{\R}{\mathbb{R}}
\title{Normalized Attention Without Probability Cage}
\author{%
  Oliver Richter and Roger Wattenhofer\\
  Department of Electrical Engineering and Information Technology\\
  ETH Zurich, Switzerland \\
  \texttt{\{richtero,wattenhofer\}@ethz.ch} \\
}
\begin{document}

\maketitle

\begin{abstract}
  Attention architectures are widely used; they recently gained renewed popularity with Transformers
  yielding a streak of state of the art results. Yet, the geometrical
  implications of $\softmax$-attention remain largely unexplored. 
  In this work we highlight the limitations of constraining attention weights to the probability simplex and the resulting convex hull of value vectors.
  We show that Transformers are sequence length dependent biased towards token isolation at initialization and contrast Transformers to simple max- and sum-pooling -- two strong baselines rarely reported. We 
  propose to replace the $\softmax$ in self-attention with normalization, yielding a hyperparameter and data-bias robust, generally applicable architecture.
  We support our insights with empirical results from more than 25,000 trained models. All results and implementations are made available.\footnote{\url{https://github.com/OliverRichter/normalized-attention}}
\end{abstract}

\section{Introduction}
The concept of neural attention~\cite{Attention,AttentionNLP} has sparked a number of architectural breakthroughs. The Transformer architecture~\cite{Transformer} successfully deploys multi-headed self-attention in several consecutive layers for natural language processing (NLP) -- an architecture choice that has become popular~\cite{Transformer,GPT,GPT2,BERT,XLNet,T5,RoBERTa,ELECTRA}.
Apart from NLP, self-attention has shown success in applications ranging from image classification~\cite{AttentionCNN} to generative adversarial networks~\cite{self-att-GAN} to reinforcement learning~\cite{AttentiveMultitask,TransformerRL}.
The attention architecture choice is thereby often based on one, if not both, of the following arguments:
(1) Attention helps with credit assignment by providing more direct, dynamic links between inputs and outputs. (2) Attention is directly interpretable as one can investigate the percentages to which different inputs are \quotes{attended} to.
However, this second argument has been challenged recently, as several works show that attention weights do not directly correlate with predictions~\cite{AttentionIsNotExplanation,AttentionIsNotNotExplanation,OnIdentifiabilityInTransformers,BERTsStory} in NLP models.
With interpretability in dispute, we are left with an open question: 
Can we improve the credit assignment ability by removing the constraint on attention weights to represent a 
distribution?

In this work, we show the theoretical implications of constraining the attention weights to the probability simplex, and propose an unconstrained alternative based on normalization. We show that the popular Transformer architecture has an innate bias towards token isolation at initialization and showcase implications thereof on biases in the data. Our experimental results demonstrate the advantage of unconstrained attention. In particular, we improve robustness to hyperparameters and show the general applicability of attention based architectures as compared to other architectures such as sum and max pooling.
To summarize, our contributions include:
\begin{itemize}
    \item a theoretical investigation of the probability simplex constraint in self-attention
    \item a robust, general purpose alternative based on normalization
    \item a large scale experimental comparison of the performance implications that an architecture choice entails 
    with respect to the task type, hyperparameters as well as biases in the data
\end{itemize}

\section{Background and Related Work}
\label{sec:background}

Many data processing tasks can be addressed by representing the input as a set or sequence of discrete tokens, e.g., the words in a sentence or the frames in a video. As a general formulation, we represent each input token through a vector $\vx^i\in\R^d$ for $i\in\{1,\dots, N\}$, where $N$ is the sequence length and $d$ is the dimensionality of each token.
For ease of notation we use the word \quotes{sequence} throughout, but note that all architectures discussed are also applicable to unordered sequences, i.e., sets of tokens.
Multi-headed dot-product self-attention is a fundamental building block of the Transformer architecture~\cite{Transformer}. It allows for information exchange between different tokens of the input sequence. More formally, for each attention head $m$ the input vectors $\vx^i$ are projected through an affine transformation to a query $\vq_m^i$, key $\vk_m^i$ and value vector $\vv_m^i$. The dimensionality of these vectors is chosen as $d_h = \frac{d}{M}$, where $M$ is the number of attention heads. The query and key vectors are used for a pairwise dot product, scaled by the square root of the head dimension $d_h$, to form the attention logits $l_m^{i,j}$ and attention vectors $\va_m^{i}$ as
\[l_m^{i,j} = \frac{<\vq_m^i, \vk_m^j>}{\sqrt{d_h}}\qquad \va_m^{i} = \softmax([l_m^{i,1},\dots,l_m^{i,N}])\]
where $\softmax$ refers to the normalized exponential function
$\softmax(\vx)^j = \frac{\exp(x^{j})}{\sum_k \exp(x^{k})}$
commonly used to project vectors to the probability simplex
$\mathcal{S}_P =\{\va_m^i|a_m^{i,j}\geq 0\text{ } \forall j\text{ and } \sum_j a_m^{i,j}=1\}$.
The output $\vo_m^i$ of each attention head $m$ is then given by a weighted sum of all value vectors $\vo_m^i=\sum_{j}a_m^{i,j}\cdot\vv_m^j$. These attention head outputs are concatenated and mixed trough an additional affine transformation to form the attention layer output in the Transformer architecture~\cite{Transformer}.

In this work, we investigate whether constraining the attention vectors $\va_m^i$ into the probability simplex 
through the $\softmax$ function is the best we can do. We contrast the multi-head self-attention architecture to attention-inspired architectures without $\softmax$ (discussed in Section~\ref{sec:norm_att}) as well as simpler aggregation methods commonly used. Specifically, while~Yun~et~al.~\cite{TransformerUniversalFunctionApproximator} show that Transformers are universal sequence-to-sequence function approximators, we question the practical necessity of an attention architecture, when sum pooling~\cite{DeepSets} already provides general function approximation capabilities~\cite{DeepSets,DeepSetsGNN,UniversalEquivariantSetNetworks}. Further, we compare to max pooling, a common aggregator choice that has shown good empirical success~\cite{MaxPoolingCNN,DeepSets,MaxPoolingGraphAlgorithms}.
Several recent works have proposed architectural changes to the Transformer~\cite{TransformerUniversalFunctionApproximator,PayLessAttention,ALBERT,UniversalTransformers,LayerDrop,LiteTransformer,LayerReorder,ReZero}. However, to the best of our knowledge, we are the first to explicitly question the $\softmax$ in self-attention. 

\section{Limitations and Implications of Softmax Attention}
\label{sec:limitations}

To start our discussion, we highlight an observation that follows directly from attention vectors $\va^i$ being constrained to the probability simplex $\mathcal{S}_P$:

\textbf{Attention head outputs $\vo^i_m$ are convex combinations of value vectors $\vv^i_m$}\\
This in itself has drastic implications. First and foremost, we note that a convex combination of vectors $\vv^i_m$ cannot yield any vector outside the convex hull spanned by the value vectors $\vv^i_m$. An illustration of this output cage is given in Figure~\ref{fig:convex_hull}~(left).
\begin{figure}
    \centering
    \includegraphics[width=2.9cm]{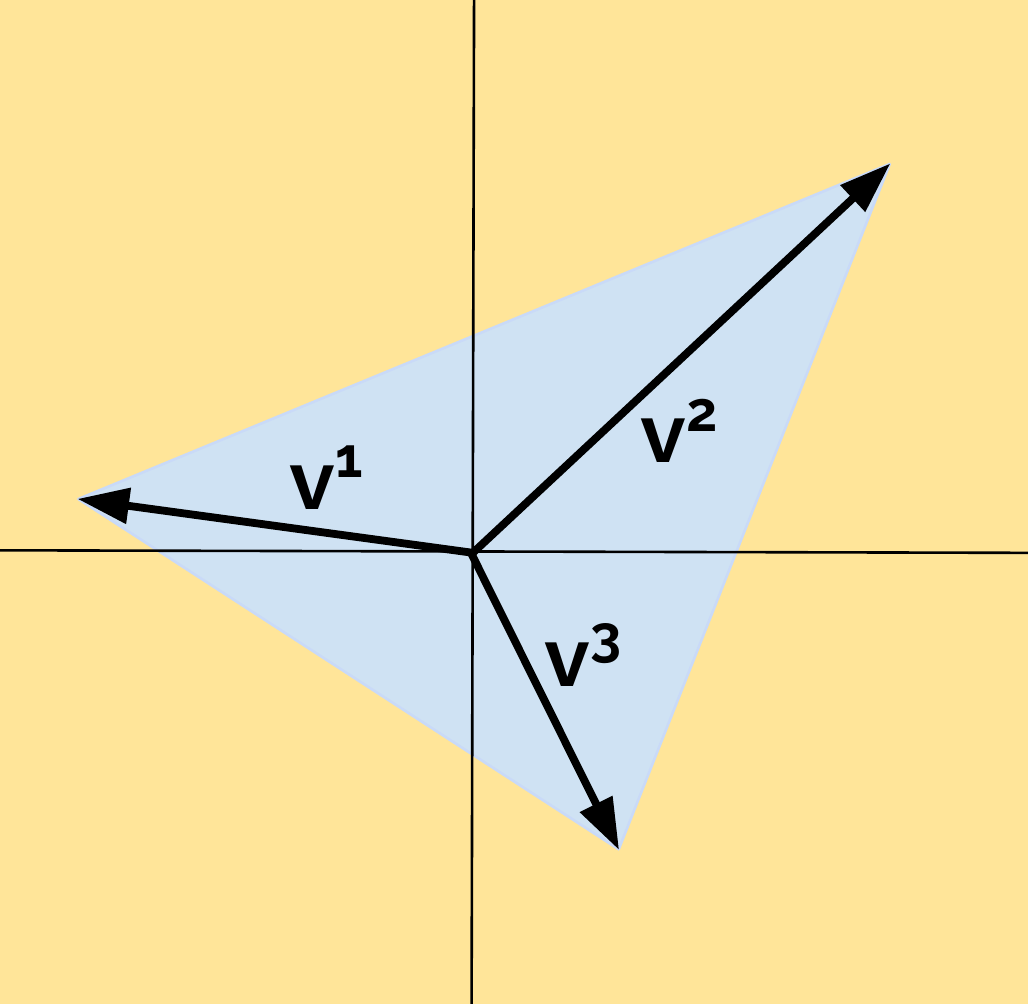}
\begin{tikzpicture}

\definecolor{color0}{rgb}{0.12156862745098,0.466666666666667,0.705882352941177}
\definecolor{color1}{rgb}{1,0.498039215686275,0.0549019607843137}
\definecolor{color2}{rgb}{0.172549019607843,0.627450980392157,0.172549019607843}
\definecolor{color3}{rgb}{0.83921568627451,0.152941176470588,0.156862745098039}
\definecolor{color4}{rgb}{0.580392156862745,0.403921568627451,0.741176470588235}

\begin{axis}[
width=4.3cm,
log basis x={10},
log basis y={10},
tick align=outside,
tick pos=left,
x grid style={white!69.0196078431373!black},
title={$\sigma$},
title style={at={(0.1,0.7)}},
xmin=0.683020128377198, xmax=2998.44750529664,
xmode=log,
xtick style={color=black},
y grid style={white!69.0196078431373!black},
ymin=0.0149407217911633, ymax=65.5895311438091,
ymode=log,
ytick style={color=black}
]
\addplot [semithick, color0]
table {%
1 1.00111916468965
2 0.79826963383494
4 0.625655236187075
8 0.48141040503904
16 0.363615598626218
32 0.269826391091718
64 0.197757402833146
128 0.142857763918753
256 0.102255732762415
512 0.0727782728991785
1024 0.0519473149176169
2048 0.0363392737206586
};
\addplot [semithick, color1]
table {%
1 1.00111916468965
2 0.706795661066525
4 0.500020934640853
8 0.353411001220428
16 0.249904639230572
32 0.176608357888241
64 0.125838629860204
128 0.0888295752733013
256 0.0630432674286595
512 0.0447665596306311
1024 0.0319845752037214
2048 0.0218744970615453
};
\addplot [semithick, color2]
table {%
1 1.00111916468965
2 1.41359132213305
4 2.00008373856341
8 2.82728800976342
16 3.99847422768915
32 5.65146745242371
64 8.05367231105307
128 11.3701856349826
256 16.1390764617368
512 22.9204785308831
1024 32.7522050086107
2048 44.7989699820447
};
\addplot [semithick, color3]
table {%
1 1.00111916468965
2 0.825578143860569
4 0.701934705627453
8 0.609717731297338
16 0.543863553185331
32 0.491538587640635
64 0.45084253204467
128 0.417992513605165
256 0.391039817422694
512 0.366946225487878
1024 0.358025170423726
2048 0.327306233769
};
\addplot [semithick, color4]
table {%
1 0.99899222244216
2 0.9992859257024
4 0.999495361003494
8 0.999642957088773
16 0.999747523031338
32 0.999821272312607
64 0.999874480545435
128 0.999911183083988
256 0.999937535056977
512 0.999956043508122
1024 0.999969113425019
2048 0.999977414608136
};
\end{axis}

\end{tikzpicture}
\begin{tikzpicture}

\definecolor{color0}{rgb}{0.12156862745098,0.466666666666667,0.705882352941177}
\definecolor{color1}{rgb}{1,0.498039215686275,0.0549019607843137}
\definecolor{color2}{rgb}{0.172549019607843,0.627450980392157,0.172549019607843}
\definecolor{color3}{rgb}{0.83921568627451,0.152941176470588,0.156862745098039}
\definecolor{color4}{rgb}{0.580392156862745,0.403921568627451,0.741176470588235}

\begin{axis}[
width=4.3cm,
legend cell align={left},
legend style={fill opacity=0.8, draw opacity=1, text opacity=1, at={(1.03,0.0)}, anchor=south west, draw=white!80!black},
log basis x={10},
log basis y={10},
tick align=outside,
tick pos=left,
x grid style={white!69.0196078431373!black},
title={Norm},
title style={at={(0.2,0.7)}},
xmin=0.683020128377198, xmax=2998.44750529664,
xmode=log,
xtick style={color=black},
y grid style={white!69.0196078431373!black},
ymin=0.168616679645055, ymax=740.22454306633,
ymode=log,
ytick style={color=black}
]
\addplot [semithick, color2]
table {%
1 11.3042319645183
2 15.9604909246871
4 22.5847568240704
8 31.92612417481
16 45.1541577741014
32 63.8163664374908
64 90.9033527066598
128 128.39811702508
256 182.320812645018
512 258.985053840978
1024 369.990434850161
2048 505.588262433117
};
\addlegendentry{sum}
\addplot [semithick, color3]
table {%
1 11.3042319645183
2 11.2860779409096
4 14.0731478879684
8 17.5059985448934
16 20.8932384506552
32 24.0277779150012
64 26.9863794911722
128 29.7480744025026
256 32.3505930950084
512 34.6220513862113
1024 37.0757659833119
2048 39.2465622771691
};
\addlegendentry{max}
\addplot [semithick, color4]
table {%
1 11.3023067743871
2 11.3056296584797
4 11.3079991547862
8 11.3096690167363
16 11.3108520468045
32 11.3116864253151
64 11.312288408027
128 11.3127036501146
256 11.3130017887637
512 11.3132111880094
1024 11.3133590572538
2048 11.3134529744294
};
\addlegendentry{normalized}
\addplot [semithick, color0]
table {%
1 11.3042319645183
2 8.97038297738671
4 7.00189372339178
8 5.36807629473375
16 4.04844523424554
32 3.00354560980339
64 2.20367056758581
128 1.595464020624
256 1.14492896698552
512 0.817258202546947
1024 0.583993256439795
2048 0.408760237890229
};
\addlegendentry{attention}
\addplot [semithick, color1]
table {%
1 11.3042319645183
2 7.98024546234356
4 5.6461892060176
8 3.99076552185124
16 2.82213486088134
32 1.99426145117159
64 1.42036488604156
128 1.00311028925844
256 0.712190674394601
512 0.505830183283161
1024 0.36131878403336
2048 0.24686926876617
};
\addlegendentry{mean}
\end{axis}

\end{tikzpicture}
    \caption{\textbf{Left: }Softmax attention outputs can only lie within the convex hull spanned by the value vectors $\vv^i$ (blue region). \textbf{Middle/Right: }The standard deviation ($\sigma$) and norm of a pooling output is dependent on the sequence length $N$ (x-axis) and the pooling method, if the output is not normalized. Softmax attention outputs scale similar to mean pooling at initialization, i.e., Transformers focus more on local information in longer sequences.}
    \label{fig:convex_hull}
\end{figure}
We conjecture that this constraint limits flexibility -- and thereby ease of adjustment -- of the functions expressed by the neural network throughout the training process. This conjecture is supported by our experimental results
showing an increased robustness to hyperparameter choices when the constraint is removed.
Exploring the observation above further, we note the following from a theoretical perspective:

\textbf{No convex combination can represent the binary exclusive OR (XOR) function}\\
A formal proof is given in Appendix~\ref{app:proofs}. Note that this implication highlights an inability to represent non-linearity. While XOR can be represented in architectures with multiple heads and layers, the insight further underlines our argument: An aggregation with weights constrained to the probability simplex is restrictive. Especially if we compare it to other aggregation methods that can represent XOR (cf. Section~\ref{sec:norm_att}).
Finally, we want to highlight an additional insight that, to the best of our knowledge, has not been discussed in the literature so far:

\textbf{Transformers have an aggregation size dependent focus on local information at initialization.}
To see this, consider the embeddings after the first residual connection, given by 
\[\ve^i = \vx^i + \mW\left[\vo_1^i,\dots,\vo_M^i\right] +\vb = \vx^i + \mW\left[\sum_{j}a_1^{i,j}\cdot\vv_1^j,\dots,\sum_{j}a_M^{i,j}\cdot\vv_M^j\right] +\vb\]
where $[\cdot]$ denotes concatenation and $\mW$ and $\vb$ represent the parameters of the affine transformation that mixes the attention $M$ head outputs. 
Our aim is to show how much this embedding $\ve^i$ is influenced by the local information $\vx^i$ relative to the context information $\{\vx^j|j\neq i\}$.
We first note that the contribution of context information depends on the initialization of $\mW$ and $\vb$, where a typical initialization in language models favors the residual connection, i.e., local information.\footnote{As an example, BERT~\cite{BERT} initializes $\mW$ with parameters drawn from a truncated normal distribution with standard deviation 0.02 and $\vb$ to $\bm{0}$.} However, even if we consider $\mW$ as scale preserving, we note that the magnitudes of the attention head outputs $\vo_m$ are upper bounded by the magnitudes of the value vectors $\vv_m$ as a result of the convex hull. Moreover, attention logits are normally close to 0 at initialization (to have the softmax in the unsaturated region). This yields attention to be close to mean aggregation as $\sum_{j}a_m^{i,j}\cdot\vv_m^j\approx \frac{1}{N}\sum_{j}\vv_m^j$. We note that taking the mean effectively scales the standard deviation of a random variable by the square root of the aggregation size. This means that the fraction of context information in $\ve^i$ is dependent on the sequence length and is smaller for longer sequences! Specifically, at initialization, Transformers focus more on local information in longer sequence than in shorter sequences. For reference, we visualize the dependence of $\vo_m$ on aggregation size at initialization for different aggregators in Figure~\ref{fig:convex_hull}~(right). Details on the corresponding experiment can be found in Appendix~\ref{app:seq_dependent_aggregation}. We note that while an architectural bias towards local information might be beneficial in some applications, the implicit dependence on aggregation size is questionable.

\section{Normalized Attention Pooling}
\label{sec:norm_att}
Given the implications that a self-attention based architecture brings along, a few natural questions to ask are: What happens if we remove the softmax? 
Is some form of online logit normalization necessary at all? And how do these architectures compare to simpler pooling methods like sum- or max-pooling?
To investigate these, we contrast the following architectures in our experiments. We provide a schematic figure of each architecture in Appendix~\ref{app:arch}.

\textbf{Transformer Encoder (BERT):} As a starting point, we replicate the encoder architecture presented by~\cite{Transformer} as described in the code release of~\cite{BERT}.\footnote{\url{https://github.com/google-research/bert}} This architecture is among others also used by~\cite{GPT,GPT2,XLNet,T5,RoBERTa,ELECTRA}.
Each Transformer-layer consists of two sub-modules: a multi-head self-attention \quotes{layer} and a feed forward network. Both modules have residual connections around them.
The multi-head self-attention \quotes{layer} consists of a projection to queries, keys and values, the attention mechanism as well as a mixing layer as described in Section~\ref{sec:background}.
The feed forward network consists of two layers with a GELU~\cite{gelu} non-linearity on the hidden layer.
Layer normalization~\cite{layerNorm} is applied \emph{between} incoming and outgoing residual connections. Note that this gives a crucial distinction of this architecture: Embeddings are normalized \emph{after} they are summed with the residual connection. This yields the implicit dependence on the sequence length as discussed in the end of Section~\ref{sec:limitations}.
Further, in this architecture training is done with learning rate warm-up and gradient norm clipping.

\textbf{Modified Transformer Encoder (MTE):} To overcome the implicit dependence on sequence length, reduce training specific confounding factors and to make the two sub-modules more similar to each other, we introduce the following modifications:
We remove learning rate warm-up and gradient clipping, but keep a linearly decreasing learning rate schedule, taking~\cite{budgetTraining} as reference.
Layer normalization is moved before the residual addition. Additionally, we add layer normalization on the hidden layers in the modules, i.e., before the mixing layer and before the GELU non-linearity in the feed forward network. These modifications remove the dependence on sequence length. Note that this is different from the recently studied PreNorm~\cite{TransformerRL,noTears,TransformerDifficulty} that places the normalization before the attention mechanism.
Finaly, we add an additional GELU non-linearity in the middle of the attention sub-module.
We provide an ablation of all modifications in Appendix~\ref{app:ablation}. All following architectures apply the same modifications.
The resulting \emph{MTE} architecture here still projects the attention weights to the probability simplex through the $\softmax$ in the multi-head attention. This architecture is thereby limited to convex combinations of value vectors.

\textbf{Normalized Attention Pooling (NAP):} Given the success of online normalization during training - be it through batch-~\cite{batchNorm}, layer-\cite{layerNorm} , group-~\cite{groupNorm}, instance-~\cite{instanceNorm} or weight-normalization~\cite{weightNorm} - our main proposal is to simply replace the softmax through a normalization:
\begin{equation}
\label{eq:normalize}
    \va_m^{i} = \normalize([l_m^{i,1},\dots,l_m^{i,N}])\qquad\text{with }\normalize(\vx)^j = g\cdot\frac{x^j - \mu_\vx}{\sigma_\vx} + b
\end{equation}
where $\mu_\vx=\frac{1}{N}\sum_j x^j$ and $\sigma_\vx=\frac{1}{N}\sum_j (x^j-\mu_\vx)^2$ are the mean and standard deviation of the corresponding input vector $\vx$, in our case the logit vector calculated through key-query dot products. Similar to layer normalization~\cite{layerNorm}, we introduce trainable gain and bias parameters $g$ and $b$ initialized to $1$ and $0$, respectively. However, while~\cite{layerNorm} introduce gain and bias vectors, we only introduce scalar parameters and broadcast these over the sequence/vector length, as we want the architecture to be independent of the sequence length $N$. Note that while no convex combination can represent the logical XOR, a normalized weighting can - see Appendix~\ref{app:proofs} for the corresponding proof.

\textbf{No Online Logit Normalization (NON):} To investigate whether a dynamic normalization of the attention logits is necessary, we also train a model where we use the logits $l_m^{i,j}$ directly as attention weights, i.e., $\vo_m^i=\text{GELU}(\frac{1}{\sqrt{N}}\sum_{j}l_m^{i,j}\cdot\vv_m^j)$. We also replaced the layer normalization after the attention weighting here through a simple scaling factor $\frac{1}{\sqrt{N}}$. Note that this also yields an in expectation constant contribution of context at initialization, independent of sequence length. However, the model can easily deviate from it during training.

\textbf{Simple Summation of Embeddings (sum):} From a theoretical perspective summation is sufficient for general function approximation~\cite{DeepSets,DeepSetsGNN,UniversalEquivariantSetNetworks}. Therefore, we investigate to simply replace attention through a sum-reduce-broadcast operation.

\textbf{Max Pooling over Sequence Dimension (max): } Similar to sum pooling, we can replace the attention sub-module through a simple max-reduce-broadcast operation over the sequence dimension. Note that max pooling over the sequence is a powerful operation, as 
the resulting embedding 
has a direct link to up to $d$ different tokens. 

If not varied in a corresponding experiment, we default architecture hyperparameters to $L=2$ Transformer-layers (consisting of an attention sub-module and feed forward sub-module each), $M=4$ heads to calculate the logits (if applicable), $d=128$ as model dimension and train on a total of $3200$ batches of $32$ example sequences each, using the Adam optimizer~\cite{Adam}. The hidden dimension of the feed forward sub-modules is $4\cdot d$ for the models \emph{BERT}, \emph{MTE}, \emph{NAP} and \emph{NON}. For the models \emph{sum} and \emph{max} we increase the feed forward hidden dimension to approximately match the parameter counts of the other models. 

\section{Experiments and Results}
\label{sec:experiments}
Our goal with this work is to provide an insight into the variety of performance implications that the architecture choices entail. We aim to provide these insights independent of any particular downstream application, as these architectures
can be applied to a variety of tasks -- from NLP to graph neural networks to reinforcement learning agents.
We therefore focus on carefully crafted
synthetic tasks that (1) are general enough in that we can expect the insights to generalize to a large set of downstream tasks and (2) let us modify key aspects that are hidden in real world data sets, such as a bias towards a certain sub-task.
The focus on synthetic tasks also allows us to get a better grasp on the learning dynamics -- the focus of this work -- as we can train thousands of models in diverse hyperparameter combinations.
To limit the influence of confounding variables, we generate new data points for every batch. This allows us to omit regularization. See Appendix~\ref{app:regularization} for an in depth discussion of this setup.

\subsection{Argmin-First-Argmax Case Distinction Task}
\label{sec:per_token_output}
\begin{figure}
    \centering
    \includegraphics[width=4.5cm]{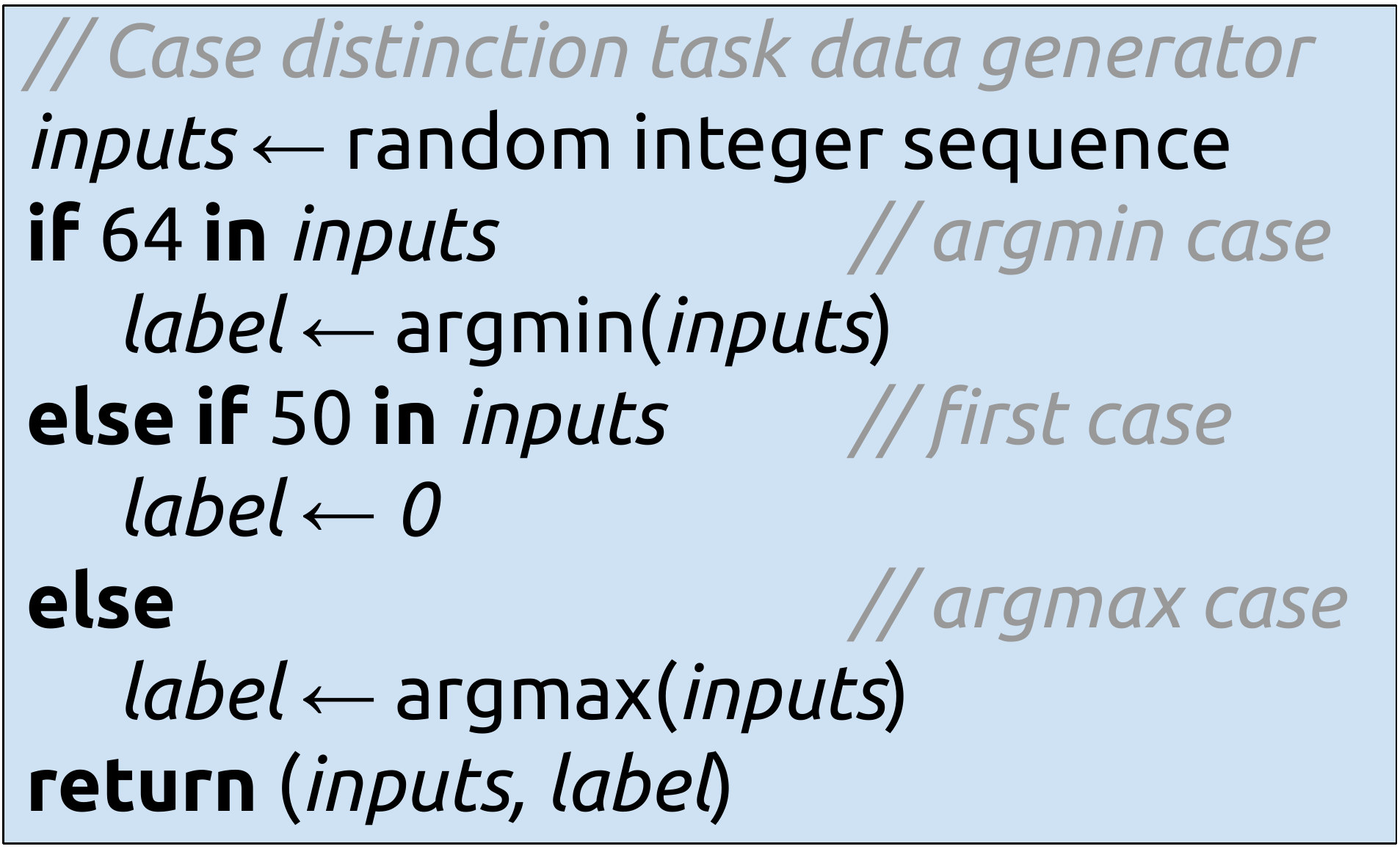}\quad
    \includegraphics[width=8.9cm]{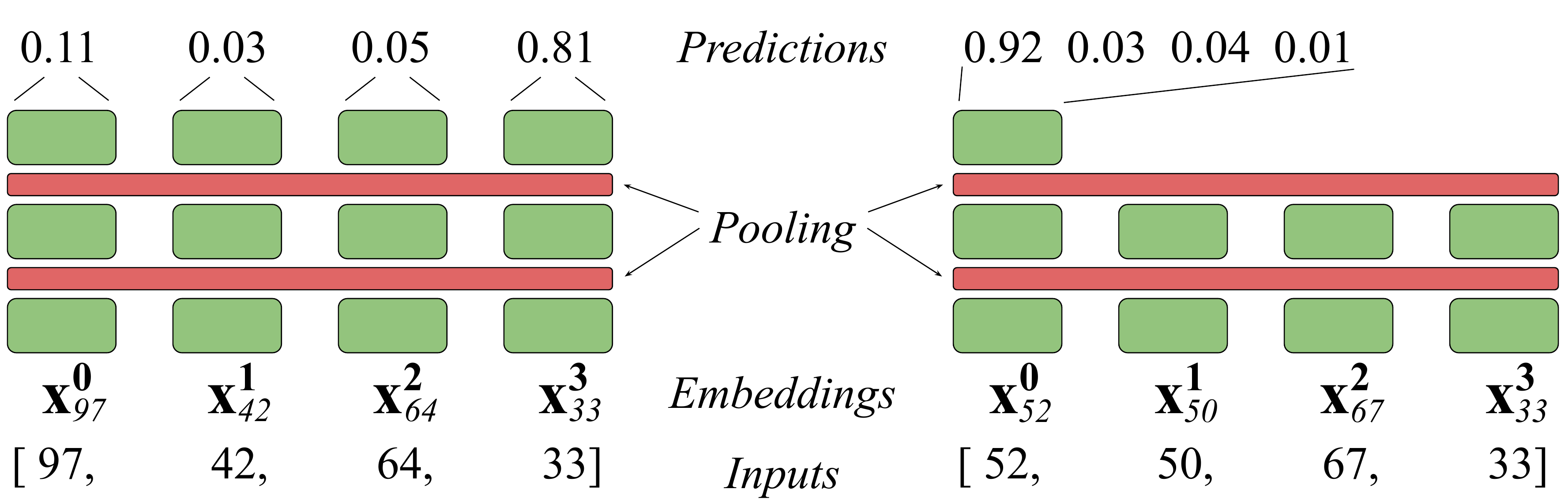}
\caption{\textbf{Left: } Pseudo code for case distinction task data. The case distinction points 64 and 50 are chosen arbitrarily. \textbf{Middle/Right: }Task setup for outputs across all tokens (middle, cf. Section~\ref{sec:per_token_output}) and outputs from the first token (right, cf. Section~\ref{sec:first_token_output}). Green boxes represent the trainable network layers (shared across tokens) while red boxes represent the pooling across tokens, the focus of this work. The targets of the displayed examples would be [0, 0, 0, 1] and [1, 0, 0, 0], respectively.}
    \label{fig:task_description}
\end{figure}
As a first task, we consider an input pipeline where tokens from a fixed integer-vocabulary are translated to a randomly initialized embedding. To the embedded tokens, a (also randomly initialized) positional embedding is added to provide position-relative information. The sequence of tokens is then processed by several architecture dependent Transformer-layers (as described in Section~\ref{sec:norm_att}). Finally, each contextualized embedding is projected to a single output. A softmax-crossentropy loss is applied over the sequence dimension to train the networks to pin-point a specific, input dependent token. See the example in the middle of Figure~\ref{fig:task_description} for a visualization.
Note that the ability to pin-point a specific token is an abstract task relevant to NLP (e.g., question answering or co-reference resolution), graph neural networks (e.g., finding the next hop in a shortest path) as well as reinforcement learning (e.g., action credit assignment).
To make the task input dependent, we generate the data as given in the pseudo code in Figure~\ref{fig:task_description}.
Note that the \texttt{argmin} and \texttt{argmax} make this task quite challenging from a learning perspective as the networks start from random embeddings which do not provide any ordering information. Which embeddings correspond to bigger integers and which to smaller integers has to be inferred during training. Further, the case distinction in this task lets us tweak the data bias towards each sub-task. Specifically, we consider a vocabulary size of $S=100$ integers (0-99) and uniformly random sampled sequences of $N=128$ tokens in length. This leads to a bias as
$p_{argmin} = 1 - (1 - \frac{1}{S})^N \approx 72.4\%$ of data points require the network to pin-point the minimum in the input sequence,
$p_{first} 
\approx 20.1\%$ 
require the network to pin-point the first token of the sequence and the remaining $p_{argmax}
\approx 7.5\%$ 
require the network to pin-point the maximum in the input.

\subsubsection{Varying Model Dimension $d$}
\label{sec:var_dim}
As a first investigation, we are interested in how varying the model dimension $d$ influences the architectures ability to learn the given task. For this, we train each of the architectures for each of the model dimensions $d\in\{8,16,32,64,128,256,512,1024\}$ using 10 different learning rates and 5 random seeds for each hyperparameter combination. 
As we want to base our insights on as many results as possible, we derive a novel, human friendly visualization of results.
Figure~\ref{fig:cases_var_dim} (top row) shows the first results as follows: The outcome of each hyperparameter combination is reported as an RGB pixel in the plot, where the R (red) value corresponds to the accuracy of the worst performing random seed, the G (green) value corresponds to the average over the random seeds and the B (blue) value corresponds to the best performing random seed. For each value (R, G and B), the max over the course of training is taken. This assignment roughly translates as follows: The brighter, the better - brighter pixels correspond to higher min-, mean- and max-accuracy. Blue/turquoise pixels highlight a large performance variation across random seeds and black/grey pixels correspond to hyperparameter combinations where none of the random seeds could solve the task.
\begin{figure}
    \centering
    \input{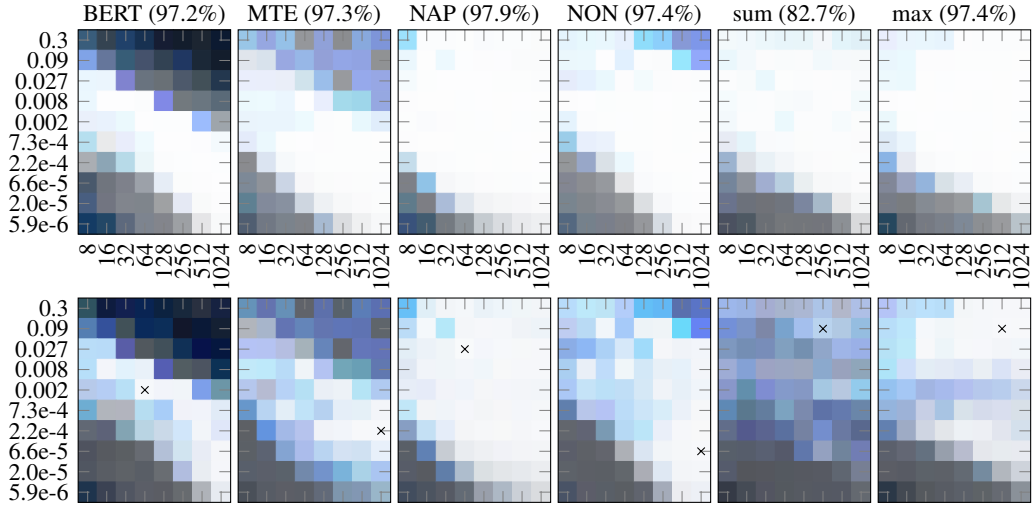}
    \caption{Learning rate (y-axis) vs. model dimension $d$ (x-axis) on the argmin-first-argmax case distinction task (with output across all tokens). The pixels' R (red), G (green) and B (blue) values correspond to min-, mean- and max-accuracy, respectively, of the corresponding hyperparameter combination
    -- see main text for details.
    \textbf{Top row: }Training accuracy (sequence length $N=128$). \textbf{Bottom row: }Validation accuracy when validating on sequences of half the length ($N=64$). Crosses indicate the combination for best mean validation accuracy, which we report behind the model name.}
    \label{fig:cases_var_dim}
\end{figure}
These condensed results directly give rise to the following observations: 
(1) All models have some hyper-parameter combinations that learn the task well (white pixels). 
(2) The optimal learning rate depends on the model size, especially in the \emph{BERT} architecture. This has profound implications for hyperparameter optimization: Tuning hyperparameters independent of each other might lead to sub-optimal results.
(3) Models with probability simplex limitations (\emph{BERT} and \emph{MTE}) work for a smaller range of hyperparameters.
We provide case learning curves and additional results in Appendix~\ref{app:full_results}.
Next, given that all architectures are applicable to sequences of any length, we investigate how the architectures generalize to sequences of different length. Specifically, we validated each of the models trained above after every 100 batches on 32 batches with sequences of half the length ($N=64$). We report the corresponding accuracies as before in Figure~\ref{fig:cases_var_dim} (bottom row). Note that as we are taking the maximum over the course of training, we report optimal early stopping results. We observe:
(1) The \emph{sum} architecture does not generalize well
in this task.
(2) Our \emph{NAP} architecture seems to be the most robust to this generalization.

\subsubsection{Case Accuracy under Varying Data Biases}
\label{sec:var_seq_length}
As a next experiment we reset the model dimension to $d=128$ and vary the sequence length $N\in\{4,8,16,32,64,128,256,512\}$. Note that this implicitly varies the biases $p_{argmin}$, $p_{first}$ and $p_{argmax}$ in the data. We report the case specific accuracies in Figure~\ref{fig:cases_var_seq} as follows: After every 100 batches, we validate the models on 1000 examples per case. Reported is the best accurracy over the course of training in form of pixel value with R (red) corresponding to the \emph{argmin}-case accuracy, G (green) corresponding to the \emph{first}-case accuracy and B (blue) corresponding to the \emph{argmax}-case accuracy. As a consequence, white pixels correspond to all cases learned and yellow pixels correspond to the \emph{argmin}- and \emph{first}-case learned.
\begin{figure}
    \centering
    \input{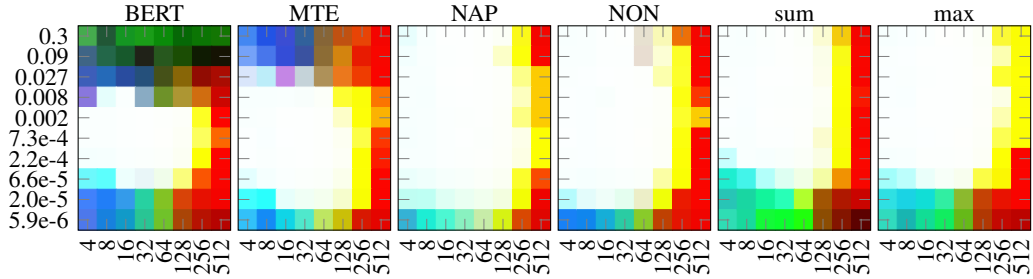}
    \caption{Learning rate (y-axis) vs. sequence length $N$ (x-axis) on the case distinction task (with output across all tokens). RGB pixel values correspond to \emph{argmin}-, \emph{first}- and \emph{argmax}-mean-case-accuracies, respectively.}
    \label{fig:cases_var_seq}
\end{figure}
We make the following observations:
(1) If the learning rate is too low, models tend to focus on the majority case (indicated in a shift from blue to red as the bias shifts from the \emph{argmax}- to the \emph{argmin}-case with increasing sequence length $N$).
(2) If the learning rate is too high, the \emph{BERT} architecture tends to focus on the \emph{first}-case. We believe this is due to the architectural bias towards local information as discussed in Section~\ref{sec:limitations}. Note that the \emph{first}-case can be solved by relying on the local positional embedding.
(3) Only the \emph{NAP} and \emph{max} architecture manage to learn all three cases from the highly biased data when $N=256$.
In Appendix~\ref{app:var_batch} we provide a further experiment investigating different batch sizes. The results are complementary.

\subsection{First Token Output}
\label{sec:first_token_output}
The task so far requires the architectures to learn an information flow between tokens to distinguish the case and decide per token, whether it is the token that is looked for or not. Now we investigate, whether all this information can also be aggregated into a single token. We therefore modify the architecture output slightly in that we only take the contextualized embedding of the first token and project from it to a vector of size $N$ (see example on the right in Figure~\ref{fig:task_description}). Note that this task set-up is harder and can highlight bottlenecks in the information flow across tokens.

We fix the sequence lenght to $N=128$ and again vary the model dimension $d$. We report the
the case specific mean accuracies in Figure~\ref{fig:cases_var_dim_first_token_output}, min-, mean- and max-overall-accuracies are given in Appendix~\ref{app:first_token_var_dim}.
\begin{figure}
    \centering
    \input{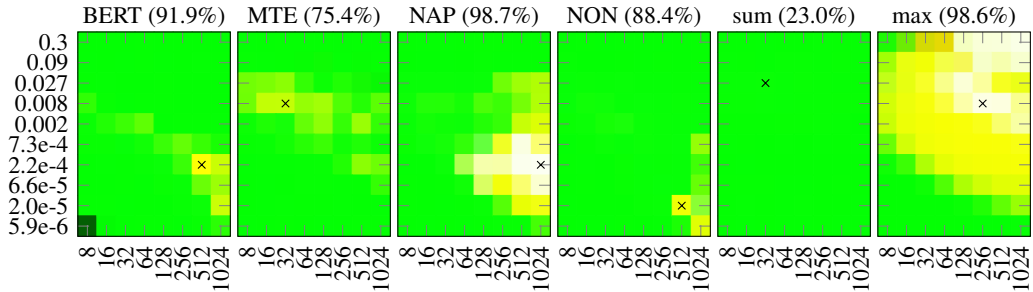}
    \caption{Learning rate (y-axis) vs. model dimension $d$ (x-axis) on the case distinction task with output from the first token.
    RGB pixel values correspond to \emph{argmin}-, \emph{first}- and \emph{argmax}-mean-case-accuracies.
    Crosses indicate the best mean accuracy, which we report behind the model name.}
    \label{fig:cases_var_dim_first_token_output}
\end{figure}
We observe:
(1) All architectures learn for (almost) all combinations the now close to trivial \emph{first}-case.
(2) The \emph{sum pooling} architecture does not learn any of the other cases.
(3) Only \emph{NAP} and \emph{max} learn all three cases in some hyperparameter combinations.
The worse performance of \emph{NON} highlights the advantage of online normalization of the logits. While the $\softmax$ provides some form of online normalization, we hypothesize that the worse performance of \emph{MTE} and \emph{BERT} in this task stems from an information bottleneck induced by the probability simplex limitations. To test this hypothesis, we vary the number of attention heads $M$ with results in Figure~\ref{fig:cases_var_heads_first_token_output}.
\begin{figure}
    \centering
    \input{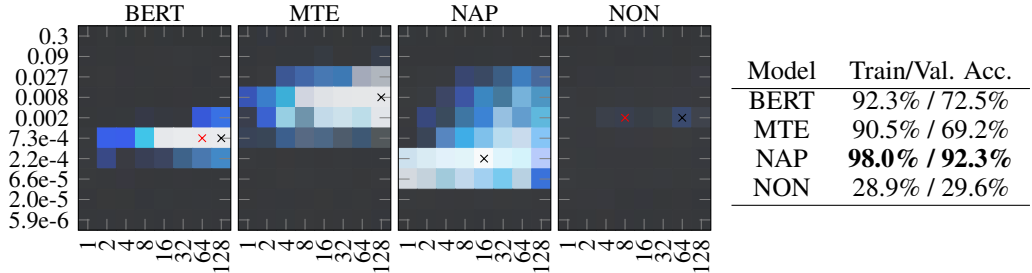}
    \begin{tabular}[b]{cc}
        Model & Train/Val. Acc.\\ \hline
        BERT & 92.3\% / 72.5\% \\
        MTE & 90.5\% / 69.2\% \\ 
        NAP & \textbf{98.0\% / 92.3\%} \\
        NON & 28.9\% / 29.6\% \\\hline
        \\
        \\
        \\
    \end{tabular}
\caption{Learning rate (y-axis) vs. attention heads $M$ (x-axis) on the case distinction task (output from the first token). RGB pixel values correspond to min, mean and max accuracy. Black crosses indicate the best mean accuracy, reported in the table to the right. Red crosses indicate that the best mean validation accuracy (when validating with $N=64$) was taken from a different combination.
Bold numbers indicate a min-accuracy higher than the best max-accuracy of the other models.}
\label{fig:cases_var_heads_first_token_output}
\end{figure}
We observe that increasing the number of heads helps the \emph{MTE} and \emph{BERT} architecture, supporting our hypothesis. Note however, that \emph{MTE} and \emph{BERT} are still outperformed significantly by \emph{NAP}.
In Appendix~\ref{app:first_token_var_depth} we provide a further experiment, varying the depth up to $L=64$. The results are complementary.

\subsection{Mode Finding Task}
\label{sec:mode}
Given the results so far, one could conclude that \emph{max} is the best choice due to its simplicity. Note however, that \emph{max} has an architectural prior that is in line with the underlying task of finding the maximum or minimum of the sequence. To study the effect of architectural priors, we experiment on an additional task: Finding the mode/most common integer in the input sequence. 
Also this task has ties to NLP (e.g., sentiment analysis), graph neural networks (e.g., consensus/agreement) and reinforcement learning (e.g., count based exploration).
Here we remove the positional embeddings, as this task can also be done on sets, and project from the contextualized embedding of the first token to a vector of dimension $S$ (the vocabulary size) over which we apply the softmax-cross-entropy loss. We keep $N=128$ but reduce $S$ to 10 to have meaningful modes. Ties are broken by taking the smallest integer of the ones with maximal occurrence. Results of varying the model dimension $d$ are reported in Figure~\ref{fig:mode_var_dim}.
\begin{figure}
    \centering
    \input{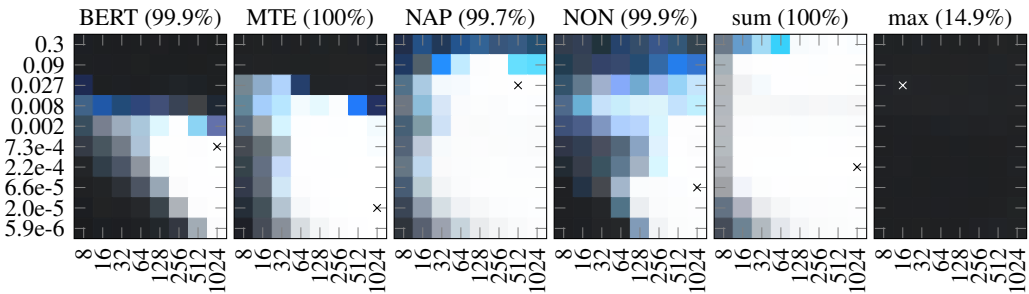}
    \caption{Learning rate (y-axis) vs. model dimension $d$ (x-axis) on the mode finding task. RGB pixel values correspond to min, mean and max accuracy. Crosses indicate the reported best mean accuracy.}
    \label{fig:mode_var_dim}
\end{figure}
We observe:
(1) \emph{sum pooling} works well on this task, as it has a suitable architecutral prior.
(2) \emph{max pooling} cannot learn the task, not even with a model dimension $d=1024=8\cdot N$.
In Appendix~\ref{app:var_vocab} we provide an additional experiment, varying the vocabulary size. The results are complementary. We refer an interested reader to~\cite{NNReasoning} for more on architecture-task alignment.

\section{Conclusion}
Taking all observations together, we come to the following conclusions: Many recent works apply some sort of neural self-attention mechanism involving a $\softmax$ that projects the attention weights to the probability simplex.
In this work we question the $\softmax$ in dot-product self-attention modules. Our theoretical investigation shows 
that $\softmax$-attention outputs are constrained to the convex hull spanned by the value vectors. In our experiments we show that this can lead to an unwanted hyperparameter sensibility. We show that simpler architectures like max- and sum-pooling perform well when their architectural prior aligns with the underlying task. These architectures however fail in cases where the architectural prior is not suitable.
As a solution, we propose to replace the $\softmax$ in attention through normalization.
Our resulting normalized attention pooling (\emph{NAP}) architecture is the only architecture of the 6 investigated that performs well in all tasks and setups, showing a broad applicability and better performance than the widely used \emph{BERT} architecture. We hope that our work provides a stepping stone to examine architectures with respect to biases in the data. Further, we see a lot of potential for future work to investigate the correlated effects of hyperparameters.

\newpage
\section*{Broader Impact}
We contrast different architectures on an abstract level in this work. 
Hence, there is no direct risk associated with system failure or an implication that would put some at a disadvantage. 
On the contrary: We see huge potential in our work to benefit (1) researchers and practitioners that do not have the computational resources to perform expensive hyperparameter optimizations and (2) minorities under-represented in data-sets, as our proposed architecture shows increased robustness to hyperparameter changes and biases in the data. 

\begin{ack}
The main author would like to thank his colleagues Damián Pascual, Béni Egressy, Lukas Faber, Gino Brunner, Zhao Meng and Johannes Ackermann for the insightful discussions and helpful feedback on preliminary versions of this work.
\end{ack}

\bibliographystyle{plain}
\bibliography{ref}

\clearpage
\appendix

\section{Lemmas and Proofs}
\label{app:proofs}
\begin{lemma}
No convex combination can represent the binary exclusive OR (XOR) function defined on binary inputs $x_1\in\{0,1\}$ and $x_2\in\{0,1\}$ 
by the indicator function as $XOR(x_1,x_2) = \1_{x_1\neq x_2}$.
\end{lemma}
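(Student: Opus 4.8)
The plan is to argue by contradiction: assume a convex-combination aggregator computes XOR and derive an impossibility by evaluating it on the input $(1,1)$, supplemented by a short convexity observation that also covers the reading ``convex combination followed by a linear readout''. First I would fix notation. With two tokens, a convex combination of the associated value vectors $\vv^1,\vv^2\in\R^d$ is $\vo = a^1\vv^1 + a^2\vv^2$ with $a^1,a^2\ge 0$ and $a^1+a^2=1$; I allow the weights $a^1,a^2$ to depend arbitrarily on the input pair $(x_1,x_2)$, while each $\vv^i$ depends only on its own token, say $\vv^i=\vv(x_i)$ for a fixed map $\vv:\{0,1\}\to\R^d$. ``Representing XOR'' then means $g(\vo)=\1_{x_1\neq x_2}$ on all four inputs for a fixed readout $g$; the statement as written is the special case $d=1$, $\vv=\mathrm{id}$, $g=\mathrm{id}$.

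The key step is the evaluation on the two diagonal inputs. On $(1,1)$ both tokens carry the same value, so $\vo=(a^1+a^2)\,\vv(1)=\vv(1)$ regardless of the weights; likewise $\vo=\vv(0)$ on $(0,0)$. On the off-diagonal inputs $(0,1)$ and $(1,0)$ the output is, by definition, a convex combination of $\vv(0)$ and $\vv(1)$, hence some point on the segment $[\vv(0),\vv(1)]$. For the statement as literally written this already finishes the argument: on $(1,1)$ the aggregator returns $1$, whereas $\1_{1\neq 1}=0$, a contradiction.

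To also rule out a linear readout $g(\vo)=\mathrm{sign}(\langle\vw,\vo\rangle+b)$, I would close with one line of convexity: any off-diagonal output equals $\lambda\,\vv(0)+(1-\lambda)\,\vv(1)$ for some $\lambda\in[0,1]$, so $\langle\vw,\cdot\rangle+b$ evaluated there equals $\lambda(\langle\vw,\vv(0)\rangle+b)+(1-\lambda)(\langle\vw,\vv(1)\rangle+b)$, which cannot be strictly positive when both endpoint values are strictly negative; thus a hyperplane labelling $\vv(0)$ and $\vv(1)$ by $0$ must label the whole segment by $0$, contradicting the required value $1$ on the off-diagonal inputs (the degenerate subcases $\vv(0)=\vv(1)$ and $\lambda\in\{0,1\}$ are immediate). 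I expect the only real ``obstacle'' to be a definitional one: if an arbitrary \emph{nonlinear} $g$ were allowed, a bump-shaped readout separating the endpoints of $[\vv(0),\vv(1)]$ from its interior would succeed, so I would state the intended scope (bare aggregator, or aggregator composed with an affine map) up front and prove the claim precisely in that form --- consistent with the paper's remark that XOR does become representable once additional heads and nonlinear layers are introduced.
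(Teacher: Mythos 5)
Your proposal is correct and its core is exactly the paper's argument: evaluating the convex combination at $x_1=x_2=1$ forces the output $a_1+a_2=1$, contradicting $XOR(1,1)=0$. Your extra remarks (input-dependent weights, vector values, a linear readout) add generality the paper does not claim, but they do not change the essential step.
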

\begin{proof}
Suppose there exist convex combination weights $a_1$ and $a_2$ with $a_1 + a_2 = 1$, such that $a_1\cdot x_1 + a_2\cdot x_2$ represents the XOR function. Plugging in $x_1=x_2=1$ yields $a_1\cdot x_1 + a_2\cdot x_2 = a_1 + a_2 = 1$, which gives the contradiction.
\end{proof}

\begin{lemma}
Given the two binary inputs $x_1\in\{0,1\}$ and $x_2\in\{0,1\}$, there exists an affine mapping $f: \{0,1\}^2\to\R^2$, such that 
\[\text{normalized weighting}(f,x_1,x_2) = \frac{f_1(x_1,x_2) - \mu_{f(x_1,x_2)}}{\sigma_{f(x_1,x_2)}}\cdot x_1 + \frac{f_2(x_1,x_2) - \mu_{f(x_1,x_2)}}{\sigma_{f(x_1,x_2)}}\cdot x_2\]
is equivalent to the logical exclusive OR given by the indicator function as $XOR(x_1,x_2) = \1_{x_1\neq x_2}$.
\end{lemma}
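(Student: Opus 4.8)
The plan is to exploit the fact that a normalization over a length‑$2$ vector is essentially one‑dimensional. For any pair $(f_1,f_2)$ the mean is the midpoint, $\mu_f=\tfrac12(f_1+f_2)$, so the centered coordinates are $f_1-\mu_f=\tfrac12(f_1-f_2)$ and $f_2-\mu_f=-\tfrac12(f_1-f_2)$; in particular the two normalized weights are negatives of one another and depend only on $g:=f_1-f_2$. Hence the normalized weighting collapses to $c(x_1,x_2)\,(x_1-x_2)$ for a scalar $c$ that is a fixed function of $g(x_1,x_2)$ alone: reading $\sigma$ literally as the variance in (\ref{eq:normalize}) we have $\sigma=g^2/4$ for a length‑$2$ vector, so $c=2/g$ (and the standard‑deviation reading gives $\sigma=|g|/2$, hence $c=\operatorname{sign}(g)$ — the argument below covers both).

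First I would record the immediate consequence: on the diagonal inputs $(0,0)$ and $(1,1)$ one has $x_1-x_2=0$, so the weighting is $0$, which matches $XOR=0$ there, \emph{provided} $g$ is nonzero at those two points so that the normalization is well defined. On the two off‑diagonal inputs $x_1-x_2=\pm1$ and we need the weighting to equal $1$; this amounts to $c(1,0)=1$ and $c(0,1)=-1$, i.e.\ (in the variance reading) $g(1,0)=2$ and $g(0,1)=-2$ — a pair of affine‑evaluation constraints on $g$. So it suffices to pick an affine $g:\{0,1\}^2\to\R$ with $g(1,0)=2$, $g(0,1)=-2$, and $g$ nonzero at $(0,0)$ and $(1,1)$, and then realize $g$ as $f_1-f_2$ with $f$ affine.

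Then I would simply exhibit such an $f$, e.g.\ $f(x_1,x_2)=(\,x_1-3x_2+1,\ 0\,)$, so that $g(x_1,x_2)=x_1-3x_2+1$ takes the values $1,2,-2,-1$ at $(0,0),(1,0),(0,1),(1,1)$ — never zero, so the normalization is always defined — whence $c=2/g$ takes the values $2,1,-1,-2$ and the normalized weighting $c\cdot(x_1-x_2)$ takes the values $0,1,1,0$, which is exactly $XOR$; the sign reading gives $\operatorname{sign}(g)=1,1,-1,-1$ and the same outputs, so the single map $f$ works under either convention. All that then remains is the routine four‑row table‑check substituting each input into (\ref{eq:normalize}). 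There is no serious obstacle here; the only thing to be careful about is ensuring the normalization has nonzero spread at each of the four inputs (guaranteed by the explicit $g$), and this stands in sharp contrast to the convex‑combination case of the preceding lemma, where the constraint $a_1+a_2=1$ pins the value to $1$ at $(1,1)$ and leaves no freedom at all.
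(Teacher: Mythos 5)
Your proposal is correct and takes essentially the same route as the paper's own proof: reduce the two-point normalization to the signed difference $g=f_1-f_2$ (so the weights are $\pm$ each other and the weighting vanishes automatically on the diagonal inputs), then exhibit an explicit affine map and check the four cases — your $g(x_1,x_2)=x_1-3x_2+1$ plays the role of the paper's $g(x_1,x_2)=3x_1+1-2x_2$. The only addition is that you verify the construction under both the literal variance reading of the normalization equation and the standard-deviation reading; the paper's appendix works with the standard deviation, and your example goes through either way.
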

\begin{proof}
For a vector $\vl\in\R^2$, the standard deviation $\sigma_\vl$ can be simplified to
\[\sigma_\vl = \sqrt{\frac{1}{2}\sum_{i\in\{1,2\}}(l_i - \mu_\vl)^2} = \sqrt{\frac{1}{2}\left(\left(l_1 - \frac{l_1 + l_2}{2}\right)^2 + \left(l_2 - \frac{l_1 + l_2}{2}\right)^2\right)} = \frac{1}{2}|l_1 - l_2| \]
and the normalization function reduces to 
\[normalize(\vl) = \left[\frac{l_1 - \mu_\vl}{\sigma_\vl}, \frac{l_2 - \mu_\vl}{\sigma_\vl}\right]^T = \left[\frac{l_1 - l_2}{|l_1 - l_2|}, \frac{l_2 - l_1}{|l_1 - l_2|}\right]^T = \begin{cases}
[1, -1]^T & if\quad l_1 > l_2\\
[-1, 1]^T & if\quad l_1 < l_2\\
undef. & if\quad l_1 = l_2
\end{cases}\]
As an example, consider the affine mapping $f(\vx) = \vl = [3x_1 + 1, 2x_2]^T$, which for $x_1\in\{0,1\}$ and $x_2\in\{0,1\}$ results in the function
\[\textit{normalized weighting}(f,x_1,x_2) = \frac{3x_1 + 1 - 2x_2}{|3x_1 + 1 - 2x_2|}\cdot x_1 + \frac{-3x_1 - 1 + 2x_2}{|3x_1 + 1 - 2x_2|}\cdot x_2= 
\begin{cases}
1 & if\text{ }x_1\neq x_2\\
0 & otherwise
\end{cases}\]
\end{proof}

We note that for a realization of such an affine mapping across tokens given the weight sharing constraints of the discussed architectures we would need $x_1$ and $x_2$ to be distinguishable for the mapping to keys and queries, e.g., through positional embeddings. This however does not invalidate our conclusion that normalized weighting is more expressive than softmax weighting, as we do not require the inputs that are weighted to be distinguishable.

\section{Sequence Length Dependent Local/Context-Focus}
\label{app:seq_dependent_aggregation}
For the middle and right plot in Figure~\ref{fig:convex_hull} we sample 16'384 value, key and query vectors of dimension $d_h=128$ per sequence length $N\in\{1,2,4,8,16,32,64,128,512,1024,2048\}$ from a normal Gaussian $\mathcal{N}(\mathbf{0},\mI_{d_h})$ - $\mI_{d_h}$ being the $d_h$-dimensional identity matrix. We split the samples to form the sequences and calculate the corresponding output vectors $\vo^i$ for $i\in\{1,\dots,N\}$. Here, the softmax attention outputs are calculated as described in Section~\ref{sec:background}, while the mean-, sum- and max-outputs are calculated as mean-, sum- and max-reduce of the value vectors over the sequence dimension. For the normalized results we take the sum-output vectors and normalize them (over the $d_h$-dimensional vector dimension). Note that such a normalization can be applied to any of the aggregation methods to get qualitatively similar results. The plots in Figure~\ref{fig:convex_hull} are generated by reporting the standard deviation over all output values and the mean norm of the output values, respectively.

Given the numerous successes of Transformers in natural language processing, we conjecture that a bias towards local information might be beneficial in language modeling. However, the implicit dependence on sequence length in a model that should be oblivious to different input sequence lengths is questionable. We leave an in depth investigation to future work.

\newpage
\section{Architectures}
\label{app:arch}
We provide a schematic of 1 Transformer-layer of each architecture investigated in Figure~\ref{fig:architectures}. Our base architectures consist of 2 such layers followed by a projection to the output dependent on the task as described in the corresponding sections (cf. Section~\ref{sec:per_token_output}, \ref{sec:first_token_output} and \ref{sec:mode}).
\begin{figure}[h!]
    \centering
    \begin{subfigure}[b]{4.5cm}
    \includegraphics[width=4.5cm]{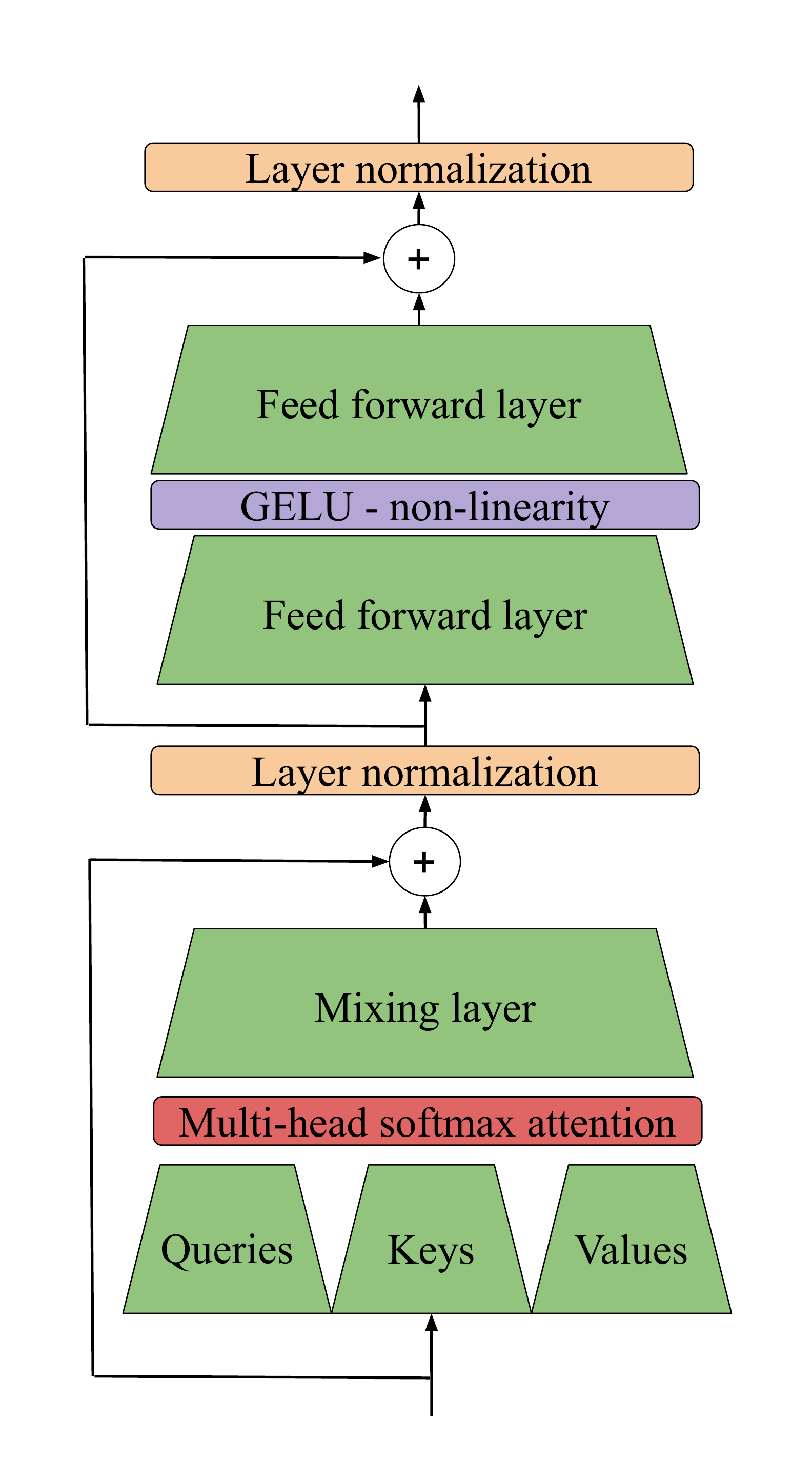}
    \subcaption{BERT}
    \end{subfigure}
    \begin{subfigure}[b]{4.5cm}
    \includegraphics[width=4.5cm]{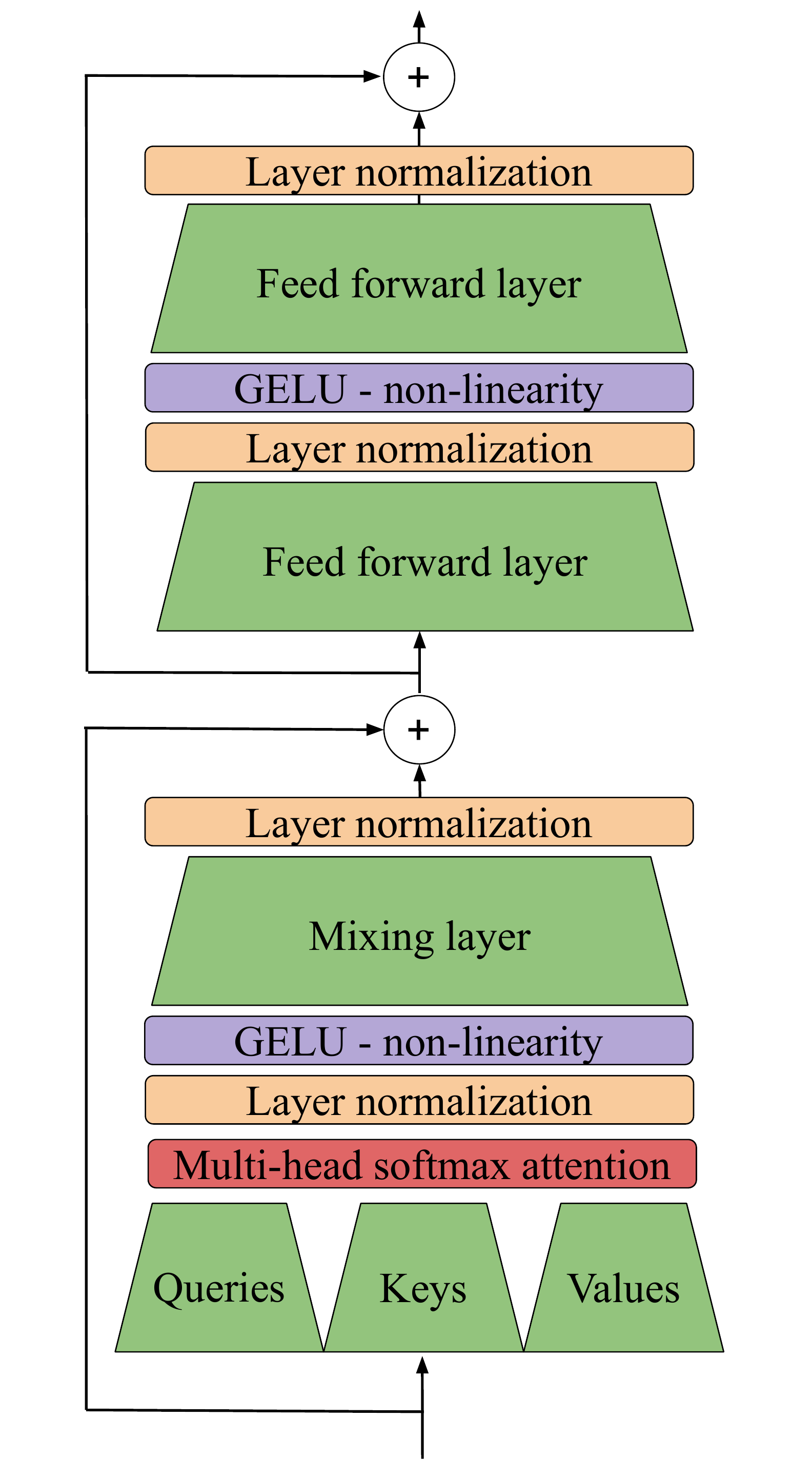}
    \subcaption{MTE}
    \end{subfigure}
    \begin{subfigure}[b]{4.5cm}
    \includegraphics[width=4.5cm]{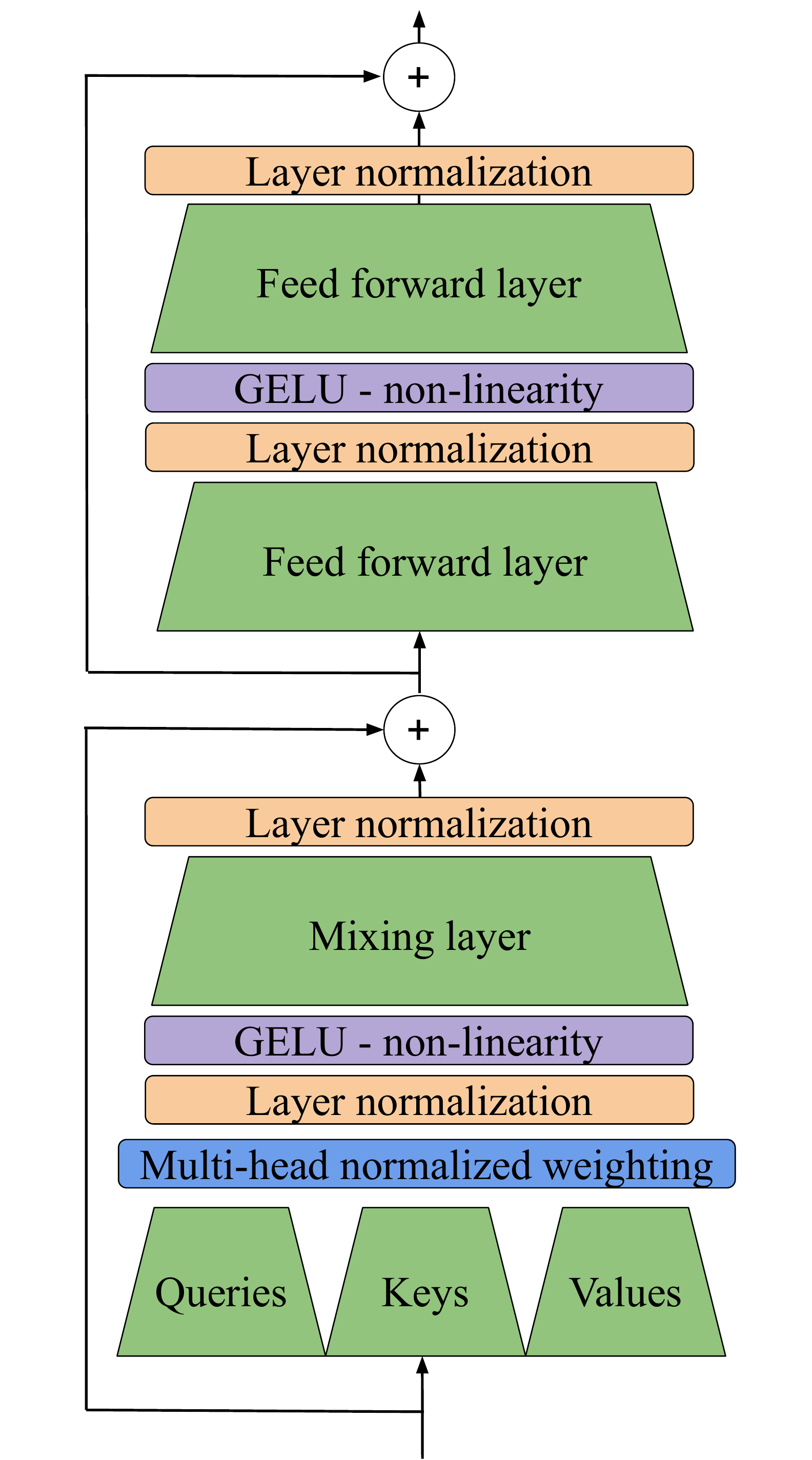}
    \subcaption{NAP}
    \end{subfigure}
    \begin{subfigure}[b]{4.5cm}
    \includegraphics[width=4.5cm]{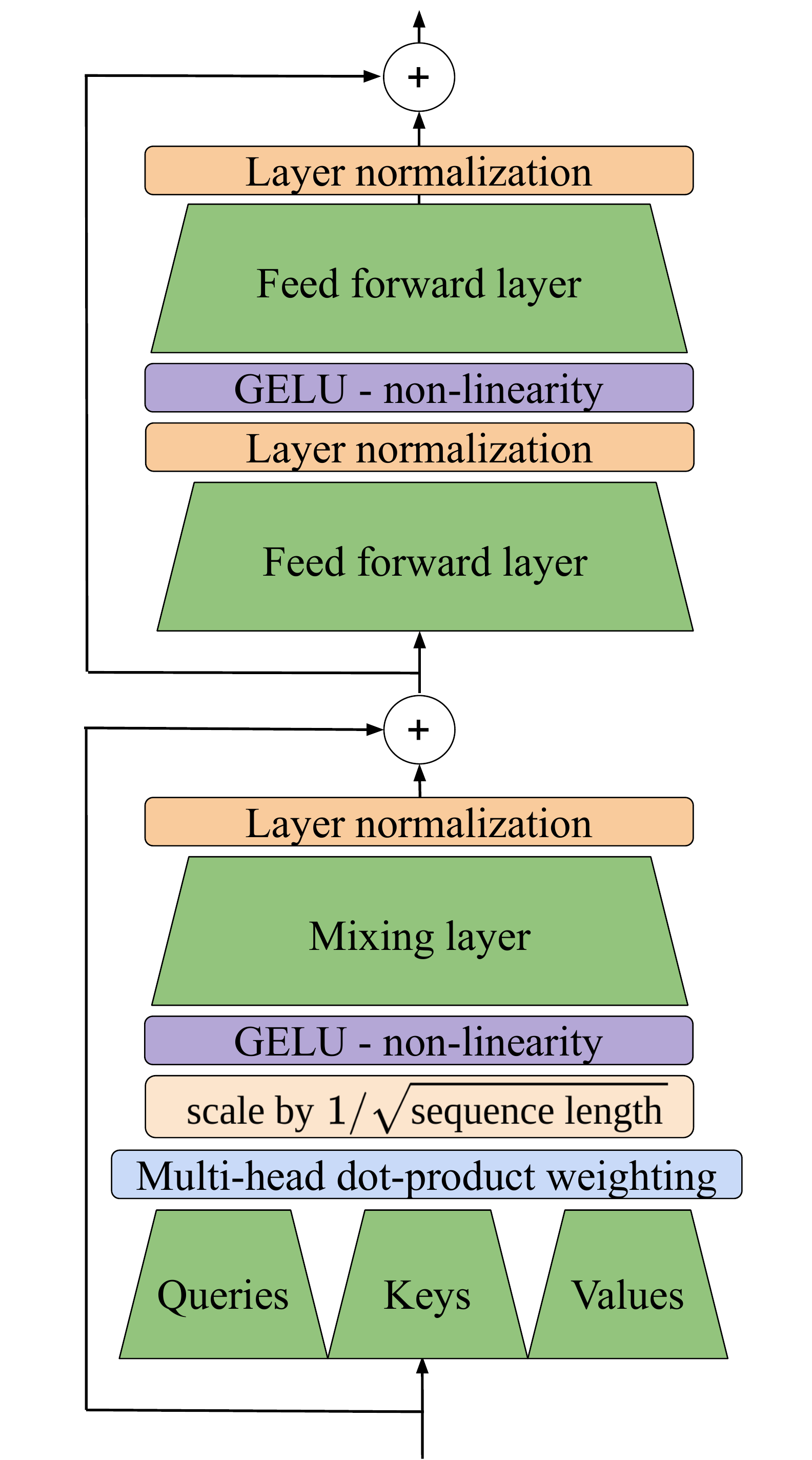}
    \subcaption{NON}
    \end{subfigure}
    \begin{subfigure}[b]{4.5cm}
    \includegraphics[width=4.5cm]{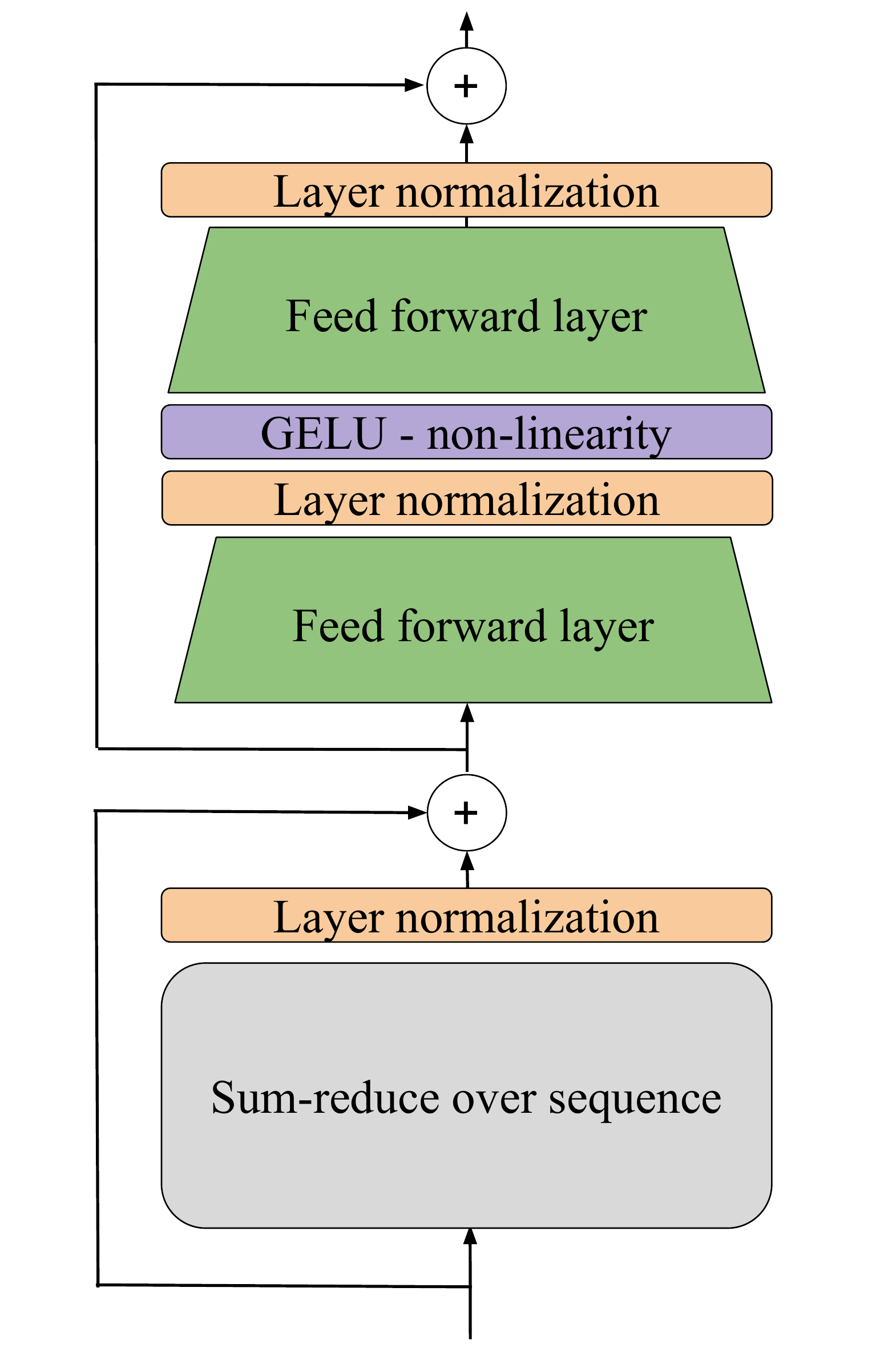}
    \subcaption{sum}
    \end{subfigure}
    \begin{subfigure}[b]{4.5cm}
    \includegraphics[width=4.5cm]{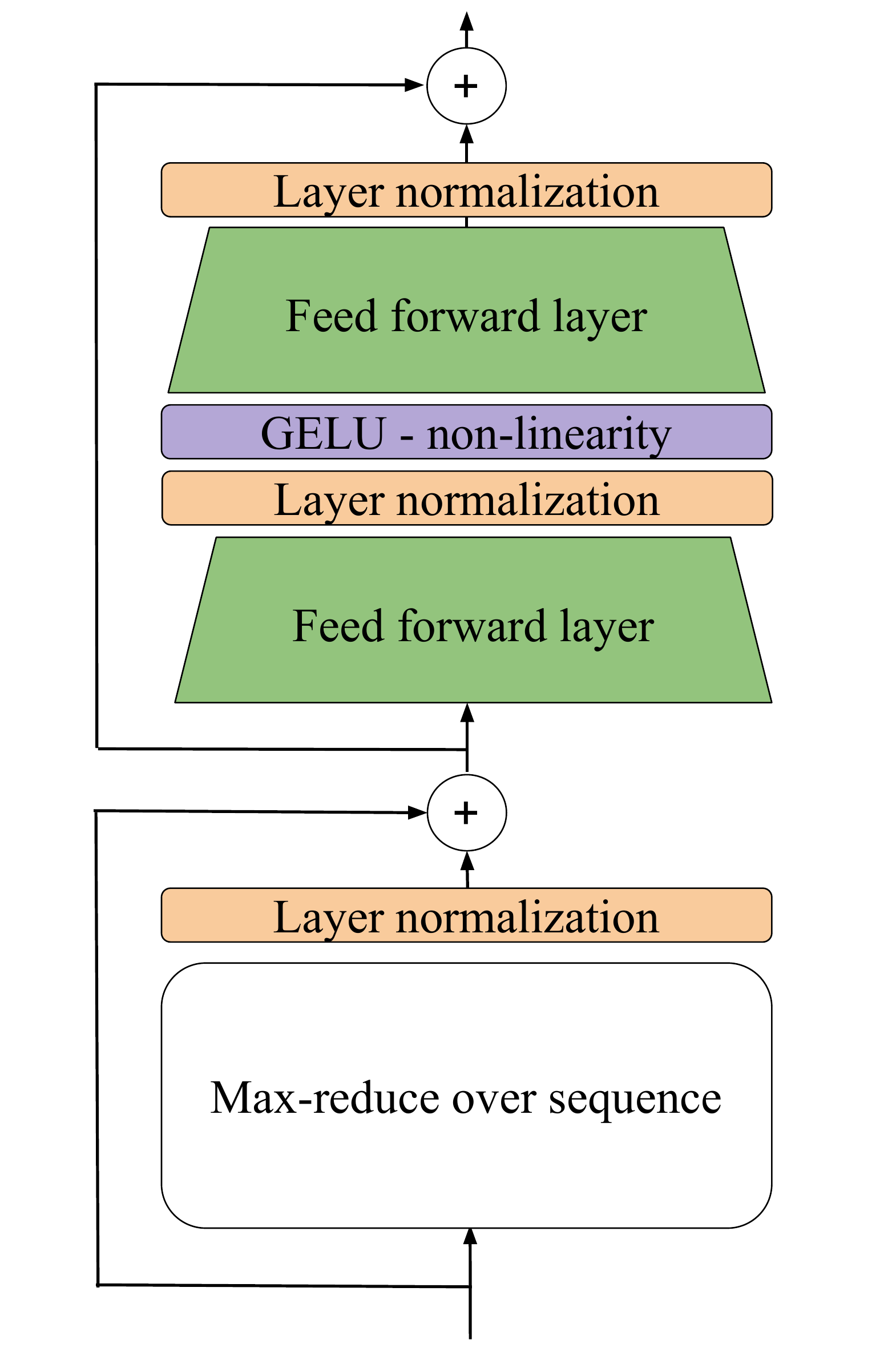}
    \subcaption{max}
    \end{subfigure}
    \caption{Schematics of 1 Transformer-layer block of the different architectures investigated. Green layers correspond to the main weight matrices that are trained. Note that displayed dimensions are not to scale - the hidden dimension of the feed forward layer is larger than the model dimension and the hidden layer size in the feed forward network of \quotes{max} and \quotes{sum} are adjusted to approximately match the parameter count of the other architectures.}
    \label{fig:architectures}
\end{figure}

\newpage
\section{Architecture Modification Ablations}
\label{app:ablation}
An empirical ablation of the modifications that lead from the \emph{BERT} architecture to the \emph{MTE} architecture is given in Figure~\ref{fig:ablation}. The plots are generated as described in Sections~\ref{sec:var_dim} and \ref{sec:var_seq_length}. 

\begin{figure}[bh]
    \centering
    \input{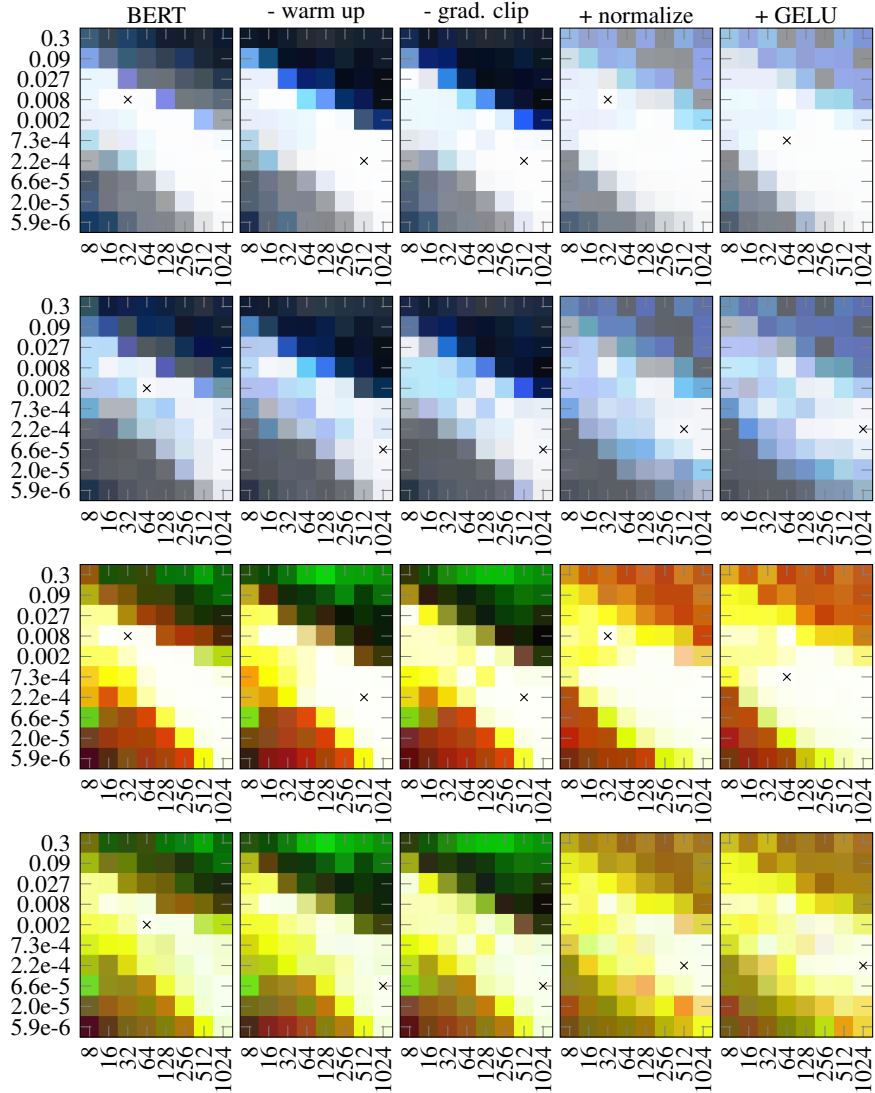}
    \caption{Learning rate (y-axis) vs. model dimension $d$ (x-axis) on the argmin-first-argmax case distinction task (with output across all tokens) - architecture modification ablation study. In the first two rows, RGB pixel values correspond to min-, mean- and max-accuracy. In the last two rows, RGB pixel values correspondto \emph{argmin}-, \emph{first}- and \emph{argmax}-mean-case-accuracies. \textbf{1. row: }Training accuracy (sequence length $N=128$). \textbf{2. row: }Validation accuracy when validating on sequences of half the length ($N=64$). \textbf{3. row: } Training case accuracy (sequence length $N=128$). \textbf{4. row: } Validation case accuracy when validating on sequences of half the length ($N=64$). Crosses indicate the combination for best mean accuracy, the accuracies at these locations are reported in Table~\ref{tab:ablation}.}
    \label{fig:ablation}
\end{figure}

The first column in Figure~\ref{fig:ablation} corresponds to the original \emph{BERT} architecture, trained with gradient norm clipping and learning rate warm up. 

The second column (- warm up) corresponds to the same architecture, but trained without learning rate warm up. Here we see that too high learning rates learn even less without learning rate warm up in the \emph{BERT} architecture, hinting at a necessity for learning rate warm up for the original architecture.

The third column (- grad. clip) reports the results if we further remove gradient clipping from the training schedule. This does not seem to have a big impact in our setup.

Next, we report in the forth column (+ normalize) the results of moving the layer normalization before the residual addition and introducing an additional layer normalization right after the attention mechanism as well as on the hidden layer of the feed forward network. Note that this change removes the bias towards local information discussed in the end of Section~\ref{sec:limitations}. We see that this change leads to a profound shift in focus in regions where the learning rate is high: models with the original normalization focus the (local) \emph{first}-case, while models with our normalization focus on the (majority) \emph{argmin}-case. This is in line with the insights stated in Section~\ref{sec:var_seq_length}.

Finally, we report in the fifth column (+ GELU) the results of adding an additional GELU layer after the attention mechanism. These results correspond to the \emph{MTE} architecture used throughout the paper.

Apart from the performance landscape changes just mentioned, the best hyper-parameter accuracies remain similar throughout all modifications, cf. Table~\ref{tab:ablation}.

\begin{table}
    \caption{Ablation study accuracy values taken from the hyper-parameter combination that led to the best mean overall accuracy, indicated by a cross in Figure~\ref{fig:ablation}.}
    \label{tab:ablation}
    \centering
    \begin{tabular}{ccccccc}
        & & BERT & - warm up & - grad. clip & + normalize & + GELU\\ \hline
        Overall & min & 99.3\% & 99.4\% & 99.3\% & 99.2\% & 99.3\%\\
        Training & mean & 99.4\% & 99.5\% & 99.4\% & 99.3\% & 99.3\%\\
        Accuracy & max & 99.5\% & 99.6\% & 99.6\% & 99.5\% & 99.5\% \\\hline
        Overall & min & 96.5\% & 96.9\% & 96.9\% & 96.3\% & 96.7\% \\
        Validation & mean & 97.2\% & 97.6\% & 97.2\% & 96.8\% & 97.3\%\\
        Accuracy & max & 98.2\% & 98.2\% & 98.2\% & 98.4\% & 98.2\% \\\hline
        Mean Case & argmin & 99.3\% & 99.6\% & 99.5\% & 99.6\% & 99.5\%\\
        Accuracy & first & 100\% & 100\% & 100\% & 100\% & 100\% \\
        Training & argmax & 96.9\% & 98.1\% & 97.5\% & 96.5\% & 98.0\% \\\hline
        Mean Case & argmin & 98.0\% & 98.0\% & 98.0\% & 98.1\% & 97.9\%\\
        Accuracy & first & 100\% & 100\% & 100\% & 100\% & 100\%\\
        Validation & argmax & 93.9\% & 93.8\% & 93.1\% & 91.2\% & 93.8\%\\\hline
    \end{tabular}
\end{table}

\section{Regularization Experiments}
\label{app:regularization}
To limit the number of variables which are not accounted for in the experiments, we focus on the \emph{infinite data but limited training time} regime. In this regime, every batch consists of new data points. We believe that this regime is of paramount interest in future research, as more devices create a constant stream of data and training is more limited by the available training time than the available data. This regime allows us to omit regularization in all architectures as over-fitting is not an issue. In fact, our supplementary experiments below as well as related work~\cite{ALBERT} show that regularization does not help in this regime. 
We leave a comparison of the architectures in the limited data regime to future work.

Here, we show empirical results supporting the intuition that $L2$ as well as \emph{dropout} regularization does not help in our setup. For each of our tasks, we take our default hyper-parameters ($d=128$, $L=2$, $M=4$, $N=128$) and train 5 random seeds per learning rate for models with regularization, varying the dropout rate in $\{0.0625, 0.125, 0.25, 0.5\}$ and the $L2$ regularization weighting in $\{0.0001,0.001,0.01,0.1\}$. Tables~\ref{tab:regularization_1}, \ref{tab:regularization_2} and \ref{tab:regularization_3} report the best mean accuracy achieved with the
small number behind the accuracies indicating the regularization used, 1 referring to the smallest, 4 to the largest. We underline the results where regularization did lead to an improvement in mean accuracy. Note however that these improvements should be taken with a grain of salt, as (1) none of these improvements is significant considering the performance variation across random seeds and (2) the regularized values are likely to be overestimated, as the max is taken over 40 averages (4 regularization values times 10 learning rates) as compared to 10 averages (10 learning rates) in the unregulated case.
\begin{table}[h]
    \caption{Regularization results in the case distinction task with output taken across all tokens. The top three rows correspond to the best mean training accuracy, while the bottom three rows correspond to the best mean validation accuracy when validating on sequences of half the length.}
    \label{tab:regularization_1}
    \centering
    \begin{tabular}{ccccccc}
        & BERT & MTE & NAP & NON & sum & max\\ \hline
        unregularized & 99.3\% & 99.1\% & 99.3\% & 99.1\% & 98.9\% & 99.2\%\\
        with dropout & $98.1\%^1$ & $97.3\%^1$ & $97.8\%^1$ & $97.5\%^1$ & $97.3\%^1$ & $98.2\%^1$ \\
        with $L2$-regularization & $99.3\%^2$ & \underline{$99.2\%^1$} & $99.2\%^2$ & \underline{$99.2\%^1$} & $98.9\%^1$ & \underline{$99.4\%^2$}\\ \hline
        unregularized & 95.5\% & 95.5\% & 97.0\% & 95.3\% & 75.0\% & 97.1\% \\
        with dropout & $94.4\%^1$ & $94.6\%^1$ & $96.8\%^2$ & \underline{$96.0\%^1$} & \underline{$83.1\%^1$} & $96.3\%^1$ \\
        with $L2$-regularization & \underline{$97.2\%^2$} & $93.6\%^2$ & \underline{$97.1\%^1$} & \underline{$96.1\%^2$} & $67.7\%^2$ & \underline{$97.2\%^2$} \\
    \end{tabular}
\end{table}
\begin{table}[h]
    \caption{Regularization results in the case distinction task with output from the first token. The top three rows correspond to the best mean training accuracy, while the bottom three rows correspond to the best mean validation accuracy when validating on sequences of half the length.}
    \label{tab:regularization_2}
    \centering
    \begin{tabular}{ccccccc}
        & BERT & MTE & NAP & NON & sum & max\\ \hline
        unregularized & 36.6\% & 66.5\% & 94.5\% & 23.2\% & 22.8\% & 97.8\%\\
        with dropout & \underline{$44.9\%^1$} & $44.3\%^1$ & $85.0\%^1$ & $23.2\%^1$ & $22.6\%^1$ & $92.6\%^1$ \\
        with $L2$-regularization & $36.0\%^2$ & $55.3\%^1$ & $93.8\%^2$ & $22.8\%^1$ & $22.8\%^1$ & $95.4\%^1$\\ \hline
        unregularized & 36.7\% & 50.6\% & 83.9\% & 29.6\% & 28.5\% & 88.5\% \\
        with dropout & \underline{$41.4\%^2$} & $40.7\%^1$ & $74.6\%^1$ & $29.6\%^3$ & \underline{$28.9\%^4$} & $87.8\%^1$ \\
        with $L2$-regularization & \underline{$37.2\%^2$} & $45.7\%^1$ & $82.5\%^2$ & $28.9\%^1$ & \underline{$29.0\%^1$} & $81.0\%^1$ \\
    \end{tabular}
\end{table}
\begin{table}[h!]
    \caption{Regularization results in the mode finding task. The top three rows correspond to the best mean training accuracy, while the bottom three rows correspond to the best mean validation accuracy when validating on sequences of twice the length.}
    \label{tab:regularization_3}
    \centering
    \begin{tabular}{ccccccc}
        & BERT & MTE & NAP & NON & sum & max\\ \hline
        unregularized & 99.6\% & 99.8\% & 99.6\% & 98.7\% & 99.8\% & 14.4\%\\
        with dropout & $93.9\%^1$ & $93.3\%^1$ & $94.3\%^1$ & $91.8\%^1$ & $93.3\%^1$ & $24.5\%^1$ \\
        with $L2$-regularization & $99.5\%^1$ & \underline{$99.9\%^2$} & \underline{$99.7\%^1$} & \underline{$98.8\%^1$} & \underline{$99.9\%^4$} & $14.4\%^2$\\ \hline
        unregularized & 95.3\% & 95.4\% & 94.9\% & 91.3\% & 95.8\% & 13.5\% \\
        with dropout & $94.8\%^2$ & $95.4\%^2$ & $93.8\%^1$ & \underline{$92.6\%^1$} & $95.7\%^2$ & $13.4\%^4$ \\
        with $L2$-regularization & $94.7\%^1$ & \underline{$96.0\%^1$} & $94.9\%^1$ & \underline{$94.7\%^2$} & $95.8\%^1$ & \underline{$13.7\%^1$} \\
    \end{tabular}
\end{table}

Overall we note that none of the architectures consistently benefits from regularization in our setup and regularization often decreases mean performance. Further, we point out that the best performance with regularization is most of the times achieved with the smallest regularization.

\section{Case Learning Curves}
\label{app:full_results}
Figures~\ref{fig:case_learning_curves_1}, \ref{fig:case_learning_curves_2} and \ref{fig:case_learning_curves_3} show the case accuracies over the course of training. The corresponding results in the main text are given in Figure~\ref{fig:cases_var_dim} (top row). Besides the observations made in the main text, a few additional insights can be noted: (1) Cases are mostly learned in the order of their occurrences (recall that $~72.37\%$ of the examples are from the \emph{argmin} case, $~20.09\%$ are from the \emph{first} case and $~7.53\%$ are from the \emph{argmax} case). This is to be expected when training with gradient descent, cf. \cite{coherentGradients}. (2) This order is not always given in the \emph{BERT} architecture. Besides the focus on the \emph{first} case if the learning rate is too high - discussed in the main text - we also highlight a curiosity that occurs when the model dimension is too small (see plot highlighted in with red in Figure~\ref{fig:case_learning_curves_1}): The \emph{first} case is learned and then unlearned in favor of the \emph{argmin} case. Note that all 5 random seeds follow this pattern. Note also that for a different learning rate, the opposite holds as seen in the plot just below the highlighted plot.

We highly encourage an interested reader to check out our code release\footnote{\url{https://github.com/OliverRichter/normalized-attention}}, which includes all results as well as visualization scripts to inspect them further.
\begin{figure}
    \centering
    \includegraphics[width=0.94\columnwidth]{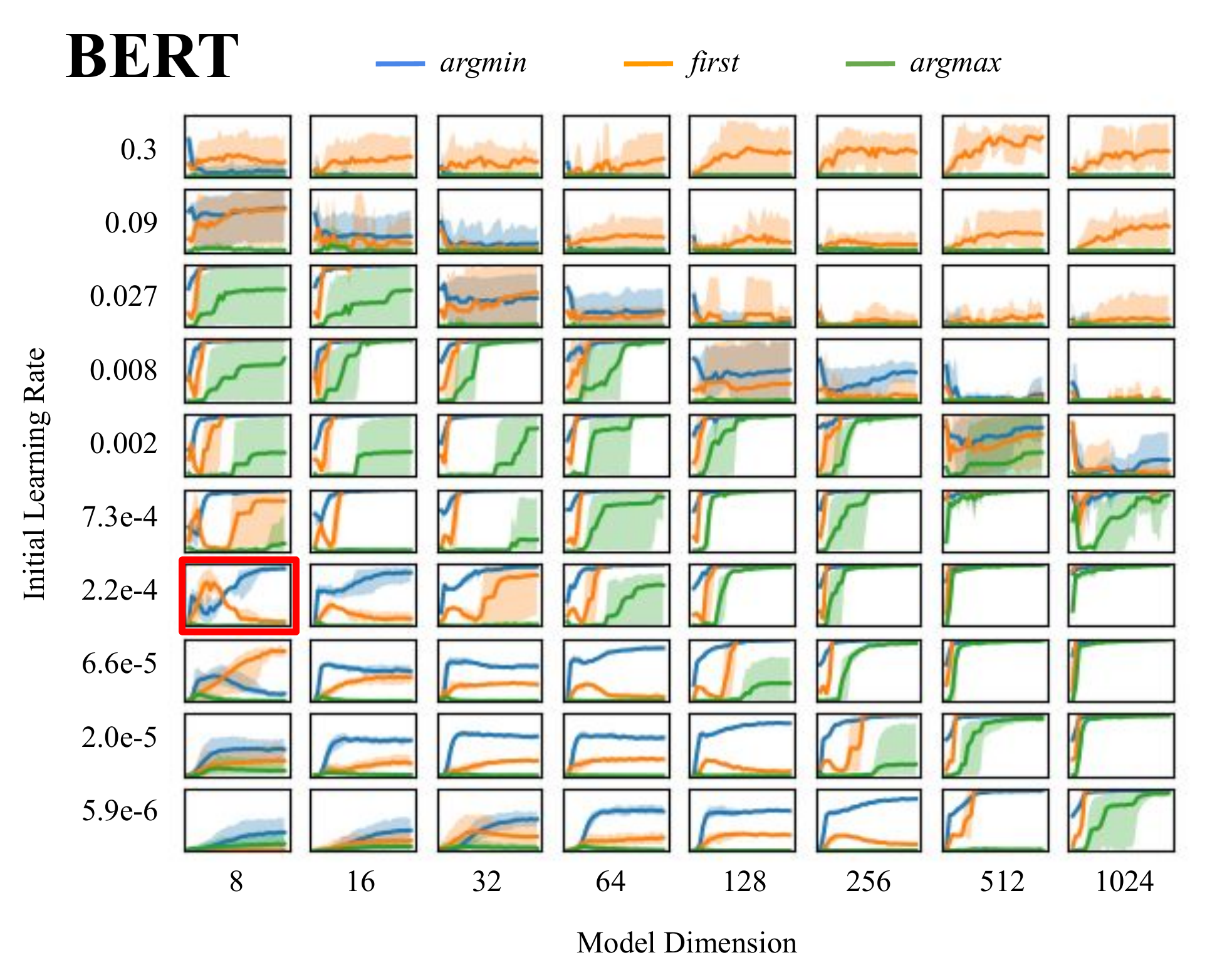}
    \includegraphics[width=0.94\columnwidth]{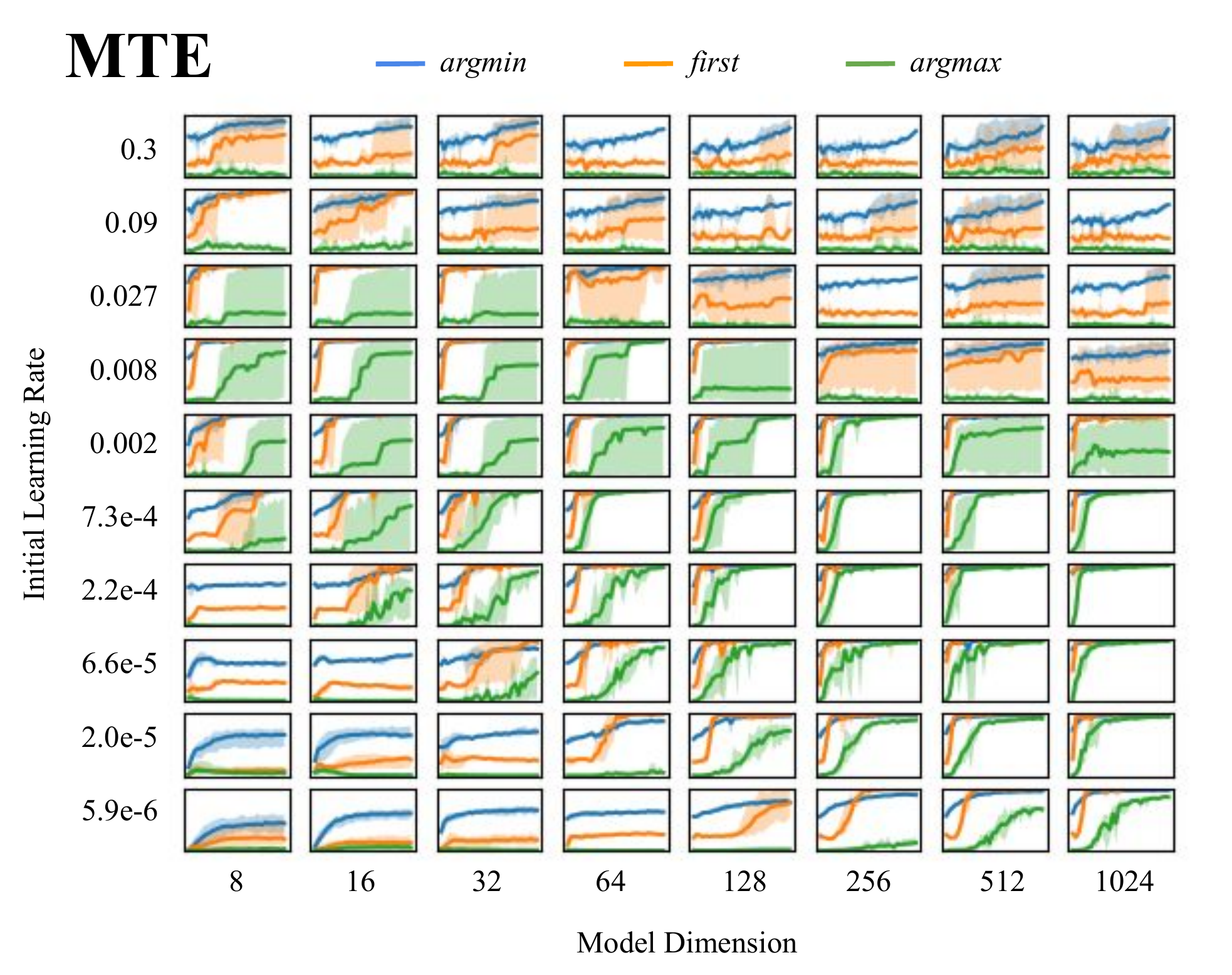}
    \caption{Case accuracies over the course of training on the \emph{argmin}-\emph{first}-\emph{argmax} case distinction task with output across all tokens, cf. Section~\ref{sec:per_token_output}. Each small sub-plot shows the case accuracies (y-axis, bottom is set to 0\%, top to 100\%) over the course of training (x-axis). Solid lines represent the mean accuracy over the 5 random seeds while shaded areas fill the spread between min- and max-accuracy achieved. Models \emph{BERT} and \emph{MTE} are shown here, cf. Figures~\ref{fig:case_learning_curves_2}~and~\ref{fig:case_learning_curves_3}.}
    \label{fig:case_learning_curves_1}
\end{figure}
\begin{figure}
    \centering
    \includegraphics[width=0.94\columnwidth]{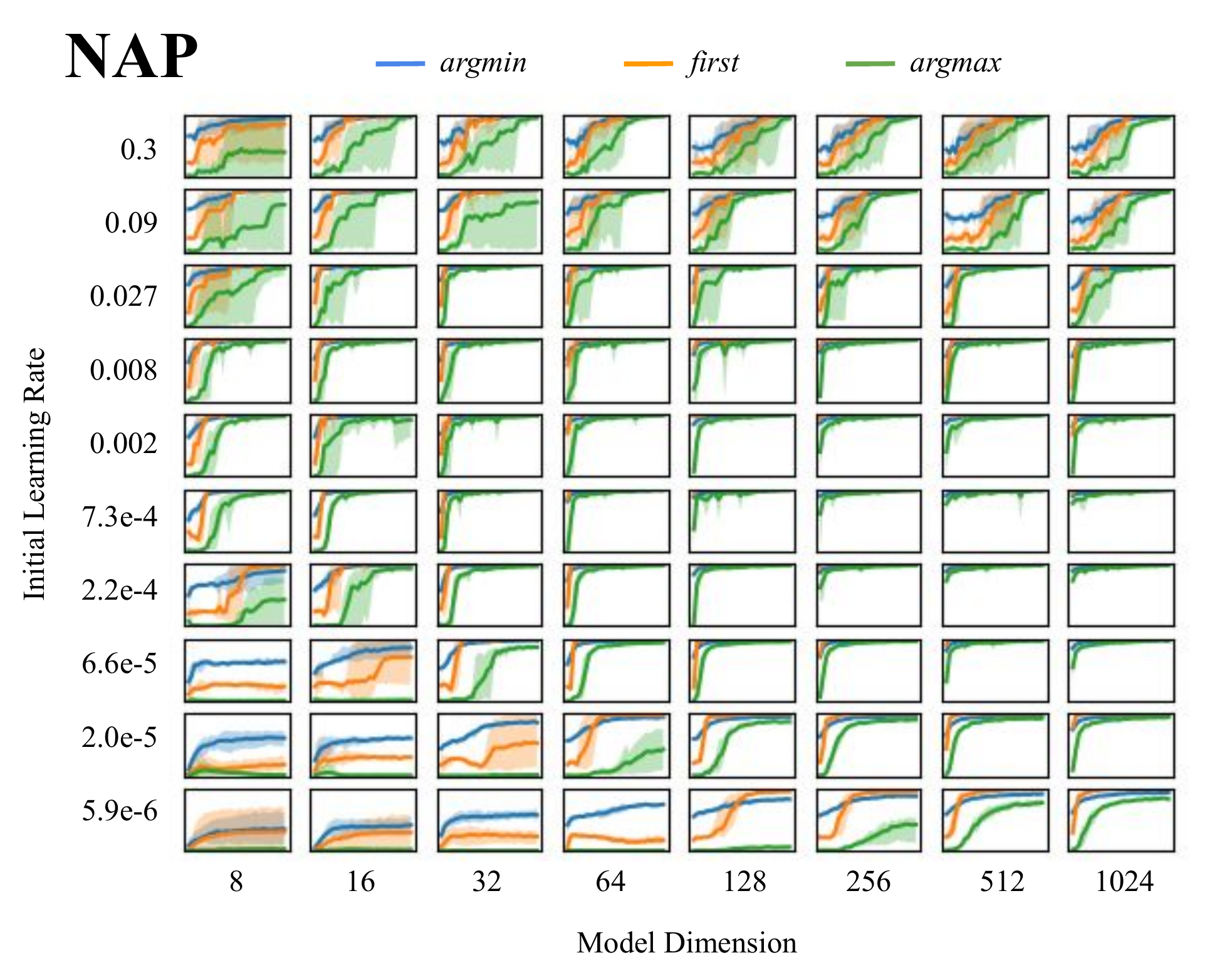}
    \includegraphics[width=0.94\columnwidth]{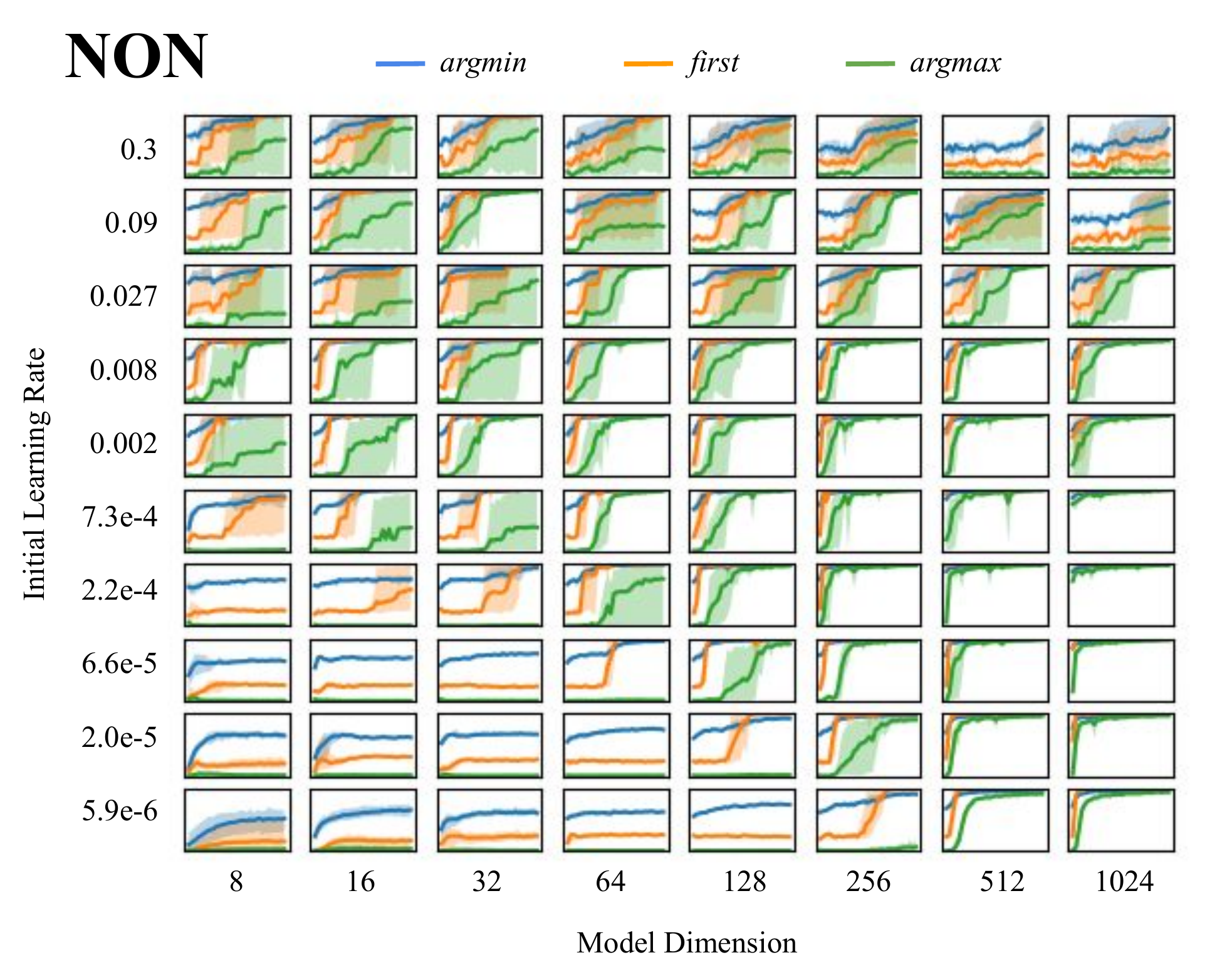}
    \caption{Case accuracies over the course of training on the \emph{argmin}-\emph{first}-\emph{argmax} case distinction task with output across all tokens, cf. Section~\ref{sec:per_token_output}. Each small sub-plot shows the case accuracies (y-axis, bottom is set to 0\%, top to 100\%) over the course of training (x-axis). Solid lines represent the mean accuracy over the 5 random seeds while shaded areas fill the spread between min- and max-accuracy achieved. Models \emph{NAP} and \emph{NON} are shown here, cf. Figures~\ref{fig:case_learning_curves_1}~and~\ref{fig:case_learning_curves_3}.}
    \label{fig:case_learning_curves_2}
\end{figure}
\begin{figure}
    \centering
    \includegraphics[width=0.94\columnwidth]{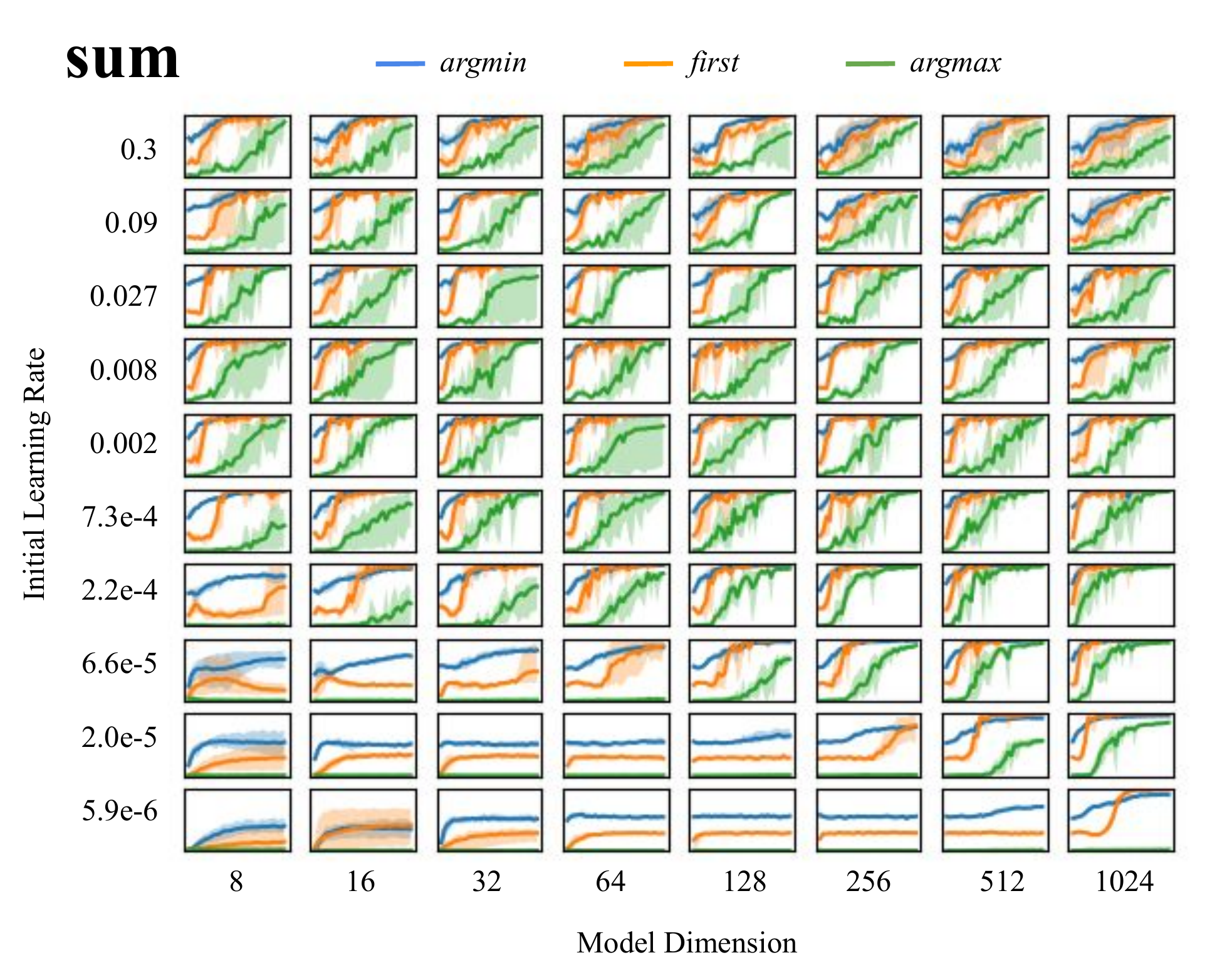}
    \includegraphics[width=0.94\columnwidth]{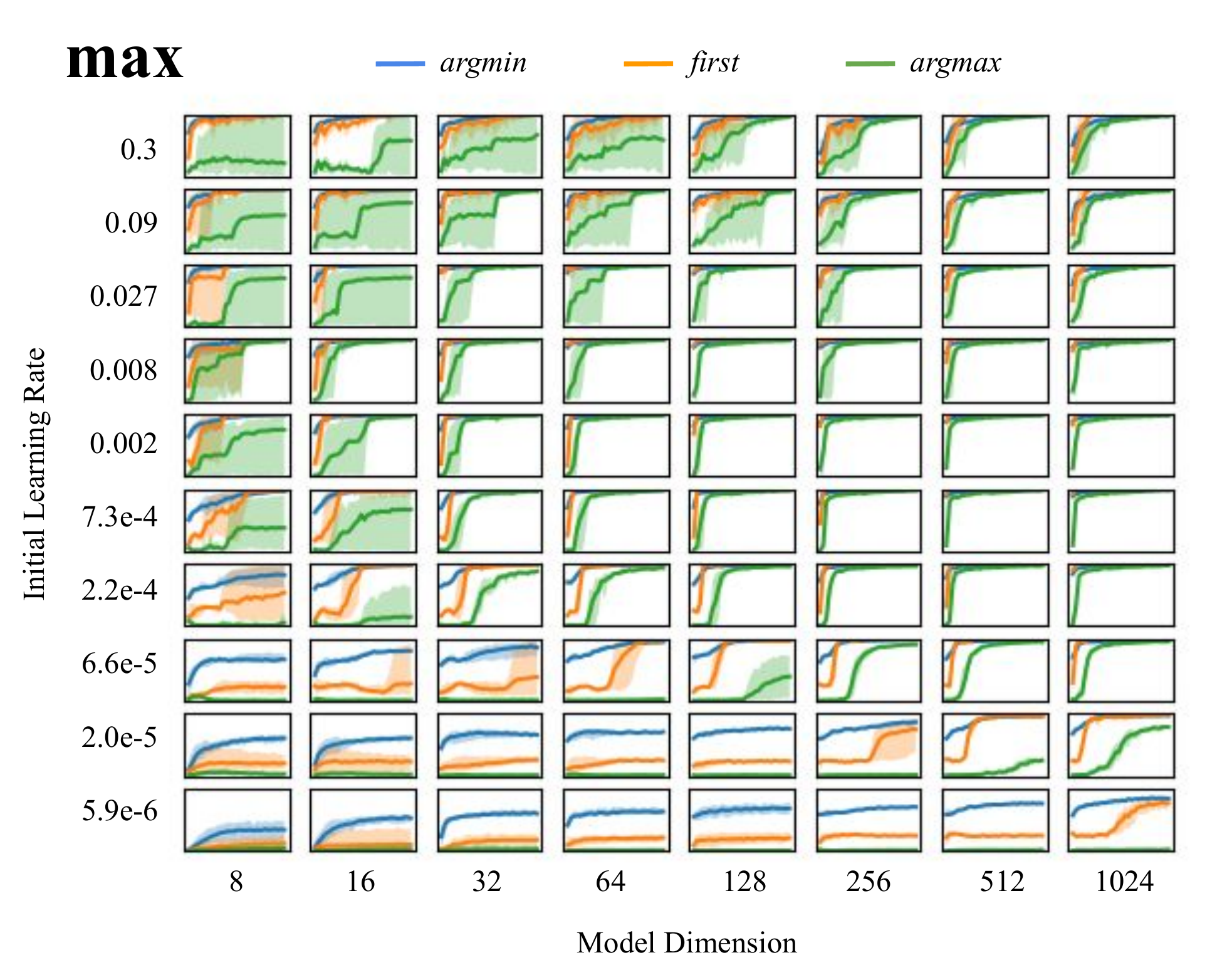}
    \caption{Case accuracies over the course of training on the \emph{argmin}-\emph{first}-\emph{argmax} case distinction task with output across all tokens, cf. Section~\ref{sec:per_token_output}. Each small sub-plot shows the case accuracies (y-axis, bottom is set to 0\%, top to 100\%) over the course of training (x-axis). Solid lines represent the mean accuracy over the 5 random seeds while shaded areas fill the spread between min- and max-accuracy achieved. Models \emph{sum} and \emph{max} are shown here, cf. Figures~\ref{fig:case_learning_curves_1}~and~\ref{fig:case_learning_curves_2}.}
    \label{fig:case_learning_curves_3}
\end{figure}

\newpage
\section{Argmin-First-Argmax Case Distinction Task - Additional Results}

\subsection{Varying Batch Size}
\label{app:var_batch}
In Figure~\ref{fig:cases_var_batch} we provide the case accuracy results of an additional experiment, varying the batch size. In this experiment we train the models using different batch sizes, adjusting the number of training steps accordingly to keep the total number of training points seen constant. With this experiment we aim to show the training behaviour of the different architectures if we go from single example batches (many, potentially noisier updates) to batches of size 128 - a batch size in which each batch contains in expectation several examples per case, but fewer updates are made to the network parameters. Besides replicating several insights made in the main text, this experiment additionally shows: (1) smaller batches require a smaller learning rate, supporting our argument that hyper-parameters should not be optimized independent of each other. (2) The focus of \emph{BERT} on the \emph{first}-case when the learning rate is too high is amplified in smaller batches.
\begin{figure}[h]
    \centering
    \input{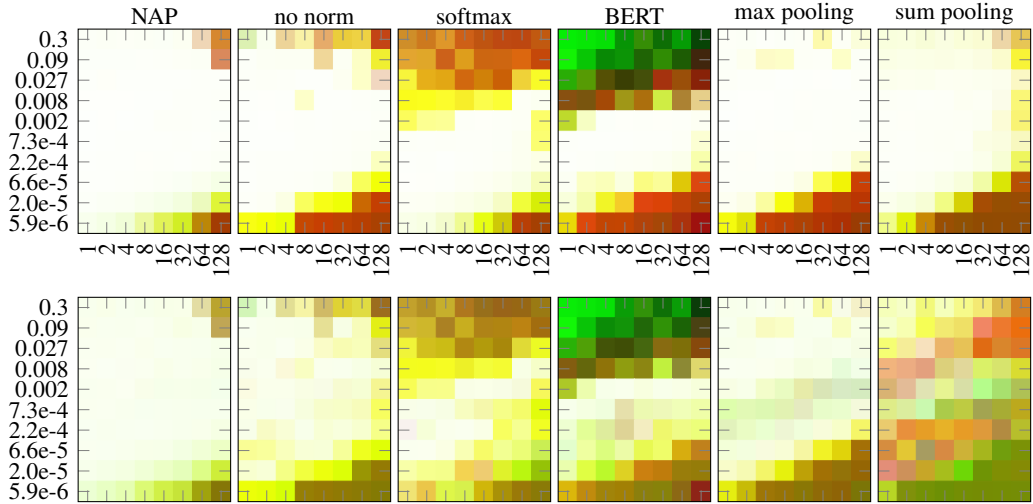}
    \caption{Learning rate (y-axis) vs. batch size (x-axis) on the argmin-first-argmax case distinction task (with output across all tokens). RGB pixel values correspond to \emph{argmin}-, \emph{first}- and \emph{argmax}-case-accuracies, respectively.}
    \label{fig:cases_var_batch}
\end{figure}

\subsection{First Token Output - Varying Model Dimension}
\label{app:first_token_var_dim}
Section~\ref{sec:first_token_output} discusses the case accuracies when training on the case distinction task with outputs taken from the first token. In Figure~\ref{fig:var_dim_first_token_output} we addtionally give best the min-, mean- and max-accuracies over the course of training. The top row corresponds to in-distribution/training accuracy ($N=128$) while the bottom row corresponds to out-of-distribution generalization accrucay when validating on sequences of half the length ($N=64$). Again we note a correlation between optimal learning rate and model dimension, especially in the \emph{BERT} and \emph{MTE} architecture. We also note that these probability simplex constrained architectures have a large performance variation across random seeds in this setup.
\begin{figure}[h]
    \centering
    \input{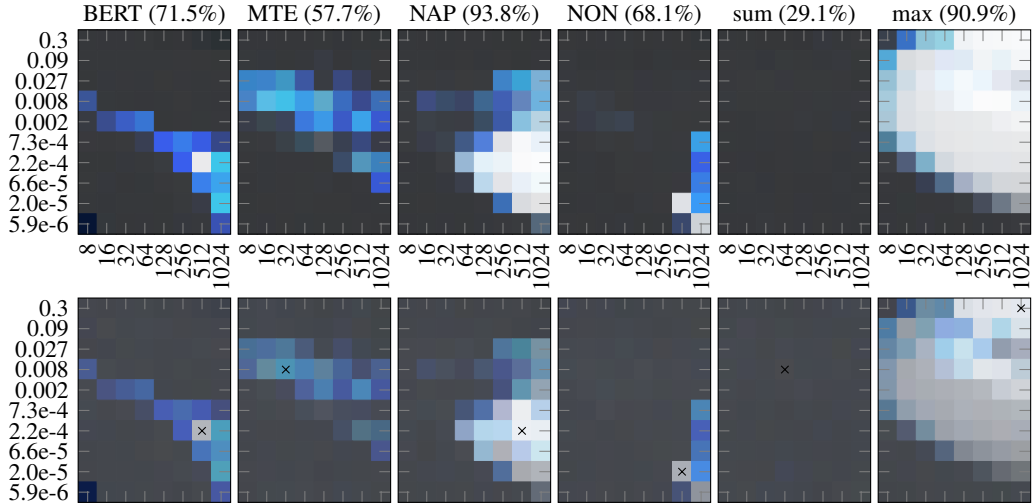}
    \caption{Learning rate (y-axis) vs. model dimension $d$ (x-axis) on the case distinction task with output from the first token. RGB pixel values correspond to min, mean and max accuracy.
    \textbf{Top row: }Training accuracy (sequence length $N=128$). \textbf{Bottom row: }Validation accuracy when validating on sequences of half the length ($N=64$). Crosses indicate the combination for best mean validation accuracy, which we report behind the model name.}
    \label{fig:var_dim_first_token_output}
\end{figure}

\subsection{First Token output - Varying Depth}
\label{app:first_token_var_depth}
In this section we investigate whether our results are tied to the shallow architecture of $L=2$ Transformer layers. We therefore vary the number of Tranfromer layers $L$ and report the results on the case distinction task with outputs taken from the first token in Figure~\ref{fig:cases_var_layers_first_token_output}. The results lead us to the following observations: (1) The \emph{BERT} architecture does seem to perform better when the number of Transformer layers is increased to $L=4$. However, the performance degrades if we further increase the depth. (2) The \emph{NAP} architecture achieves a higher best mean accuracy and performs well on a wide range of depths. (3) The \emph{max} architecture performs well on the biggest range of hyperparameters. This is due to the beneficial architectural prior as discussed in the main text.
\begin{figure}
    \centering
    \input{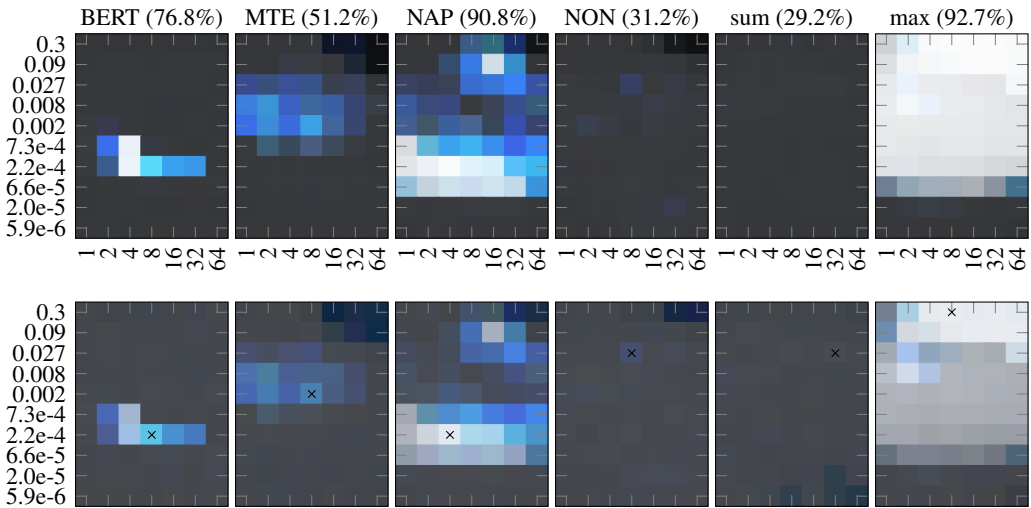}
    \caption{Learning rate (y-axis) vs. Transformer-layers $L$ (x-axis) on the case distinction task (output from the first token). RGB pixel values correspond to min, mean and max accuracy. \textbf{Top row: }Training accuracy (sequence length $N=128$). \textbf{Bottom row: }Validation accuracy when validating on sequences of half the length ($N=64$). Crosses indicate the combination for best mean validation accuracy, which we report behind the model name.}
    \label{fig:cases_var_layers_first_token_output}
\end{figure}
\section{Mode Finding Task - Varying Vocabulary Size}
\label{app:var_vocab}
Figure~\ref{fig:mode_var_vocab} shows the results of an additional experiment, varying the vocabulary size $S$ while keeping the sequence length $N=128$ constant during training. For this experiment, we also vary the total number of training steps and set it to $400\cdot S$, to keep the number of examples seen per vocabulary token approximately constant. We also include zero-shot generalization results when testing on sequences of twice the length ($N=256$). Compared to the case distinction task we can do such a generalization evaluation here as we do not learn any positional embeddings in this setup. We make the following observations:
(1) \emph{max} completely fails to learn in any of the vocabulary sizes. Note that the shading to the left merely corresponds to the majority class base rate.
(2) \emph{NAP} struggles when the vocabulary consists of only 2 tokens. This is expected, as the mean subtraction in the normalization effectively removes the task relevant information (the mode) in this case. Note however, that for a high enough learning rate, the model learns to use the bias parameter $b$ introduced in Equation~\ref{eq:normalize} - effectively reverting to sum pooling.
(3) While all models learn the task well on small vocabularies, \emph{NAP} outperforms all other approaches significantly when $S$ gets larger then the training sequence length, cf. Table~\ref{tab:mode_var_vocab}.
\begin{figure}[h]
    \centering
    \input{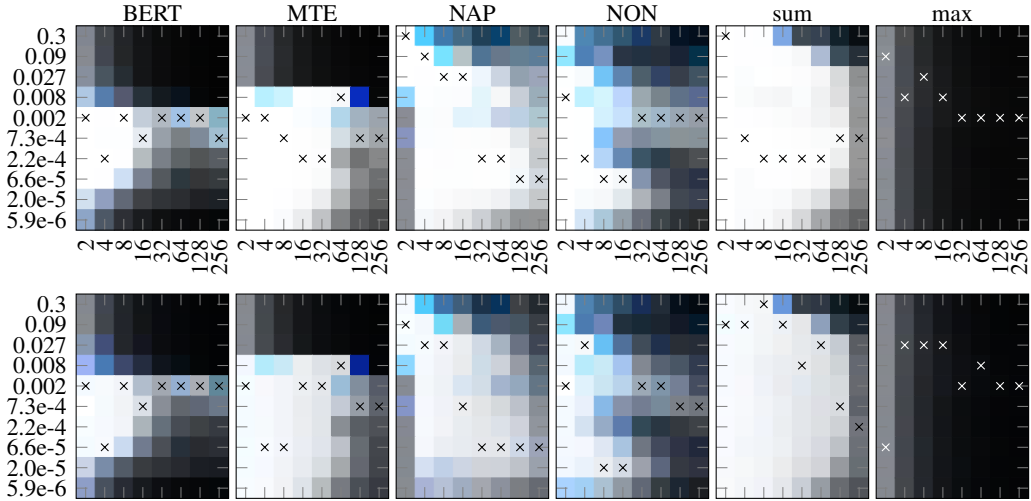}
    \caption{Learning rate (y-axis) vs. vocabulary size $S$ (x-axis) on the mode finding task. RGB pixel values correspond to min, mean and max accuracy. \textbf{Top row: }Training accuracy (sequence length $N= 128$). \textbf{Bottom row: }Validation accuracy when validating on sequences of twice the length ($N= 256$). Crosses indicate the learning rate for best mean accuracy, which we report in Table~\ref{tab:mode_var_vocab}.}
    \label{fig:mode_var_vocab}
\end{figure}
\begin{table}[h]
    \caption{Best mean accuracy per vocabulary size, taken from the combinations indicated in Figure~\ref{fig:mode_var_vocab}. First six rows correspond to training accuracies, bottom six rows correspond to validation accuracies. Bold numbers indicate a min-accuracy higher than the best max accuracy of all other models.}
    \label{tab:mode_var_vocab}
    \centering
    \begin{tabular}{ccccccccc}
        & $S=2$ & $S=4$ & $S=8$ & $S=16$ & $S=32$ & $S=64$ & $S=128$ & $S=256$\\ \hline
        BERT & 100\% & 99.9\% & 99.9 \% & 92.1\% & 72.5\% & 76.2\% & 77.4\% & 74.4\% \\
        MTE & 100\% & 100\% & 99.9\% & 99.8\% & 99.3\% & 97.3\% & 73.3\% & 64.9\% \\
        NAP & 100\% & 99.9\% & 99.8\% & 99.6\% & 99.7\% & \textbf{99.6\%} & 97.4\% & \textbf{84.6\%} \\
        NON & 100\% & 99.9\% & 99.2\% & 97.3\% & 74.7\% & 71.5\% & 65.2\% & 61.5\% \\
        sum & 100\% & 100\% & 99.9\% & 99.8\% & 99.7\% & 99.2\% & 97.5\% & 60.6\% \\
        max & 55.7\% & 30.1\% & 17.3\% & 10.4\% & 6.6\% & 4.6\% & 3.6\% & 3.1\% \\ \hline
        BERT & 100\% & 98.2\% & 95.8 \% & 88.0\% & 65.7\% & 68.6\% & 68.0\% & 53.0\% \\
        MTE & 99.2\% & 98.4\% & 96.1\% & 93.6\% & 90.5\% & 85.4\% & 61.6\% & 38.9\% \\
        NAP & 99.6\% & 98.4\% & 95.8\% & 93.1\% & 90.6\% & 90.4\% & 84.3\% & \textbf{64.3\%} \\
        NON & 100\% & 97.7\% & 93.1\% & 85.7\% & 66.4\% & 58.3\% & 50.2\% & 46.4\% \\
        sum & 99.0\% & 97.9\% & 96.7\% & 94.4\% & 91.6\% & 89.1\% & 85.9\% & 45.6\% \\
        max & 53.8\% & 29.3\% & 16.1\% & 9.5\% & 6.0\% & 4.2\% & 3.0\% & 2.1\% \\
    \end{tabular}
\end{table}

\end{document}